\newcommand{\beginsupplement}{%
        \setcounter{table}{0}
        \renewcommand{\thetable}{S\arabic{table}}%
        \setcounter{figure}{0}
        \renewcommand{\thefigure}{S\arabic{figure}}%
     }   
\newcommand{\Linefor}[2]{%
    \State \algorithmicfor\ {#1}\ \algorithmicdo\ {#2} \algorithmicend\ \algorithmicfor%
}
\newtheorem{lem}{Lemma}
\newtheorem{defi}{Definition}
\providecommand{\sct}[1]{{\sc \texttt{#1}}}
\newcommand{\argmax}{\operatornamewithlimits{argmax}}
\newcommand{\T}{^{\ensuremath{\mathsf{T}}}}           
\providecommand{\mc}[1]{\mathcal{#1}}
\newcommand{\Sporf}{\sct{Sporf}}
\newcommand{\Morf}{\sct{Morf}}
\newcommand{\Mf}{\sct{Morf}}
\newcommand{\Rf}{{\sc \texttt{RF}}}
\newcommand{\Xgboost}{{\sc \texttt{XGBoost}}}
\newcommand{\iid}{\overset{iid}{\sim}}
\newcommand{\Real}{\mathbb{R}}
\newcommand{\EE}{\mathbb{E}}           
\newcommand{\II}{\mathbb{I}}           
\newcommand{\RR}{\mathbb{R}}
\DeclareMathOperator{\R}{R} 
\title{Manifold Oblique Random Forests: \\ Towards Closing the Gap on Convolutional Deep Networks}
\author{%
    Adam Li$^{*,1,3}$, %
    Ronan Perry$^{*,1}$, %
    Chester Huynh$^{*,1,3}$, %
    Tyler M.~Tomita$^1$, %
    Ronak Mehta$^2$, %
    Jes\'us Arroyo$^2$, %
    Jesse Patsolic$^2$, %
    Benjamin Falk$^2$, %
    Sridevi V. Sarma$^{1,3}$, %
    Joshua T. Vogelstein$^{1,2,3}$%
    \thanks{
        $^1$ Department of Biomedical Engineering,
        $^2$ Center for Imaging Science,
        $^3$ Institute for Computational Medicine,
        Kavli~Neuroscience~Discovery Institute,
        Johns Hopkins University,
        $^*$ Indicates co-first authorship with equal contribution and any ordering of these authors is allowed
    } 
}
\date{June 20, 2019}
\begin{document}

\maketitle

\pagenumbering{arabic}
\setcounter{page}{1}

\begin{abstract}

Decision forests, in particular random forests and gradient boosting trees have demonstrated state-of-the-art accuracy compared to other methods in many supervised learning scenarios. Forests dominate other methods in tabular data, that is, when the feature space is unstructured, so that the signal is invariant to a permutation of the feature indices. However, in structured data lying on a manifold---such as images, and time-series---deep networks, specifically convolutional deep networks (ConvNets), tend to outperform forests. We conjecture that it is in part due to networks not simply analyzing feature magnitudes, but also their indices. In contrast, na\"ive forest implementations fail to explicitly consider feature indices. A recent approach demonstrates that forests, for each node, implicitly sample a random matrix from some specific distribution. These forests, like some networks, learn by partitioning the feature space into convex polytopes corresponding to linear functions. We build on that approach with Manifold Oblique Random Forests (Morf) that chooses distributions in a \emph{manifold-aware fashion} to incorporate feature locality. Morf runs fast and maintains interpretability and theoretical justification. Morf also has excellent empirical classification performance on simulated data and real images and multivariate time-series. It outperforms non-neural network approaches that ignore feature space structure and challenges the performance of ConvNets in some cases.
\end{abstract}

\section{Introduction}
Decision forests, including random forests and gradient boosting trees, have solidified themselves in the past couple decades as a powerful ensemble learning method in supervised settings \cite{JMLR:v15:delgado14a, Caruana:2006:ECS:1143844.1143865}, including both classification and regression \cite{hastie01statisticallearning}. In classification, each forest is a collection of decision trees whose individual classifications of a data point are aggregated together using majority vote. One of the strengths of this approach is that each decision tree need only perform better than chance for the forest to be a strong learner, given a few assumptions \cite{Schapire:1990:SWL:83637.83645,biau_consistency_nodate}. Additionally, decision trees are relatively interpretable because they can provide an understanding of which features are most important for correct classification \cite{Breiman2001}. In 2001, Breiman originally proposed decision trees that partition the data set using hyperplanes aligned to feature axes \cite{Breiman2001}. Yet, this limits the flexibility of the forest and requires trees of large depth to classify some data sets, leading to overfitting. He also suggested that algorithms which partition based on linear combinations of the coordinate axes can improve performance \cite{Breiman2001, Menze2011-obliquerf}, which was corroborated in subsequent work~\cite{tomita2}. More recently, Sparse Projection Oblique Randomer Forest (Sporf)---which leverages sparse random projections of the data---has shown impressive improvement over other methods \cite{SPORF}. Other extensions have led to neural decision forests \cite{kontschieder_deep_2015, biau_neural_2018} which attempt to combine the strengths of neural networks and random forests by using differentiable functions at split nodes and leaves, leading to trees which can be learned via backpropagation. Under certain function choices, once learned, these forests turn out to be equivalent to neural networks with many zeroed weights \cite{biau_neural_2018}.

Random forests and other machine learning algorithms typically operate in a tabular setting, viewing an observation $\bm{x}= (x_1,\ldots,x_p)^T \in \mathbb{R}^p$ as an unstructured feature vector. In doing so, they neglect the feature indices in settings where the indices encode additional information. For structured data, e.g. images or time series, traditional decision forests do not incorporate the known local structure. For decision forests to utilize known local structure in data, new features encoding this information must be manually constructed or new splitting criterion must be implemented. Prior research has extended random forests to a variety of computer vision tasks \cite{rf_keypoint_recog,rf_hough_detection,rf_image_classification,kinect_rf} and augmented random forests with structured pixel label information \cite{rf_structured}. The decision tree at the heart of the Microsoft Kinect showed great success by specializing for image data with depth information \cite{kinect_rf}. Yet these methods either generate features \emph{a priori} from individual pixels (and thus do not take full advantage of the local topology) or lack the flexibility to learn relevant structure. Other approaches have circumvented the problem of learning from raw structured data through tabular feature engineering, notably employed by the aforementioned deep neural decision forest \cite{kontschieder_deep_2015} using a convolutional deep network (ConvNets). Decision forests have also been used to learn distance metrics on unknown manifold structures \cite{Criminisi:2012:DFU:2185837.2185838}, but such manifold forest algorithms are unsupervised.

Inspired by Sporf, we propose a classification algorithm, Manifold Oblique Random Forests (Morf). Morf takes a projection distribution that accounts for neighboring features on a manifold, while incorporating enough randomness to learn the relevant projections. At each node in the decision tree, a set of neighboring features are randomly selected using knowledge of the underlying manifold. Weighting and summing the values of the selected features yields a set of oblique projections of the data which can then be evaluated to partition the observations. We show Morf's effectiveness across simulated and real-data settings as compared to common classification algorithms. In each case, Morf~performs better than non-ConvNet algorithms that lack local feature information, while approaching, or even improving upon ConvNet performance in certain real data applications. Furthermore, the optimized and parallelizable open source implementation of Morf in Python is available at \url{https://neurodata.io/code/}.

\section{Background and Related Work}
We will first define the notation and classification framework needed to describe \Mf.
\subsection{Classification}
    Let $(X,Y) \in \mathcal{X} \times \mathcal{Y}$ be a random sample from the joint distribution $F_{XY}$ and $D_n := \{(x_i,y_i)\}_{i=1}^n$ be our $n$ observed data points where all $(x_i, y_i) \in \mathcal{X} \times \mathcal{Y}$ are drawn from $F_{XY}$. Denote $\mathcal{X} \subseteq \RR^p$ as the space of data vectors, and $\mathcal{Y} = \{1,\dots,K\}$ as the space of K class labels. A classifier is a function that assigns to any unseen data point, $X$, a class label $y \in \mathcal{Y}$. Our goal is to learn a classifier $g_n(X ; D_n): \mathcal{X} \times (\mathcal{X} \times \mathcal{Y})^n \rightarrow \mathcal{Y}$ from our data $D_n$ that minimizes the expected risk corresponding to $0$-$1$ loss, equivalently the probability of incorrect classification,
    
    \begin{equation*}
        L(g) := \EE[\II[g(X) \neq Y]] = P(g(X) \neq Y),
    \end{equation*}
    
    with respect to the distribution of $F_{XY}$. The optimal such classifier is the Bayes classifier
    
    \begin{equation*}
        g^*(X):=\argmax_{y \in \{1,\dots,K\}} P(Y=y \mid X),
    \end{equation*}
    
    which has the lowest attainable risk $L^* := L(g^*(X))$.

\subsection{Random Forests}
    Originally popularized by Amit and Geman~\cite{Amit1997-nd} and subsequently codified by Breiman, the random forest (RF) classifier is empirically very effective~\cite{JMLR:v15:delgado14a} while maintaining strong theoretical guarantees \cite{Breiman2001}. A random forest is an ensemble of decision trees whose individual classifications of a data point are aggregated together using majority vote. Each decision tree recursively partitions the feature space and then makes separate predictions in each of the final subspaces. A partition occurs at a split node in the tree on a subset of the data $S = \{(x_i, y_i)\} \subseteq D_n$. The node is split into two child nodes, each associated with a partition of $S$ based on the value of a selected feature $j \in \{1,\cdots,p\}$. Let $e_j \in  \mathbb{R}^p$ denote a unit vector in the standard basis (that is, a vector with a single one in the $j$th entry and the rest of the entries are zero) and $\tau \in \RR$ a threshold value. Then, $S$ is partitioned into the two subsets, a left node ($L$), and a right node ($R$).
    
    \begin{align*}
        S^L_\theta &= \{(x_i,y_i)\:|\:e_j\T x_i < \tau, (x_i, y_i) \in S\}, \\
        S^R_\theta &= \{(x_i,y_i)\:|\:e_j\T x_i \geq \tau, (x_i, y_i) \in S\}
    \end{align*}
    
    given the parameter pair $\theta = \{e_j, \tau\}$. To choose the partition, $\theta$ is sampled $d$ times (also known as $m_{try}$ in the literature). Then the locally optimal $\theta^*=(e_j^*,\tau^*)$ pair is greedily selected from among a set of $d$ randomly selected standard basis vectors as that which maximizes some measure of information gain. A typical measure is a decrease in impurity, calculated by the Gini impurity score $I(S)$, of the resulting partitions \cite{hastie01statisticallearning}. Let $\hat p_k(S) = \frac{1}{|S|}\sum_{y_i \in S} \II[y_i = k]$ be the fraction of elements of class $k$ in partition $S$ and $I(S) := \sum_{k=1}^{K} \hat p_k(1-\hat p_k)$ be the Gini impurity. Then the split
    
    \begin{align*}
        \theta^* &= \argmax_{\theta} |S| I(S) - |S^L_\theta|I(S^L_\theta) - |S^R_\theta|I(S^R_\theta)
    \end{align*}
    
    is chosen to maximize the decrease in impurity from the parent node containing $S$. A leaf node in the decision tree is created once a partition reaches a stopping criterion, typically either falling below an impurity score threshold or a minimum number of samples \cite{hastie01statisticallearning}.

    To classify a feature vector $x$, it is evaluated at root node of the tree and split into one of the two partitions. This process is repeated recursively at subsequent split nodes until $x$ "falls into" a leaf, upon which posterior probability estimates of the class labels can be assigned. Let $l_b(x)$ be the set of training examples at the leaf node in tree $b$ into which $x$ falls. The empirical posterior probability of label $y$ in $b$ is thus $\hat{p}_{n,b}(y \mid x) = \frac{1}{|l_b(x)|} \sum_{i=1}^n \II[y_i = y]\II[x_i \in l_b(x)]$. The forest composed of $B$ trees computes the empirical posterior probability for $x$ by averaging over the trees $\hat{p}_n(y \mid x) = \frac{1}{B}\sum_{b=1}^B p_{n,b}(y \mid x)$ and classifies $x$ per the label with the greatest empirical posterior probability \cite{hastie01statisticallearning}
    
    \begin{equation*}
        g_n(x) = \argmax_{y \in \{1,\dots,K\}} \hat{p}_n(y | x).
    \end{equation*}

    For good performance of the ensemble, the individual decision trees must be relatively uncorrelated from one another. This is typically done by considering a random subset of features at each split node and training each tree on a bootstrapped subsample of the full training data. Applying these techniques reduces the amount random forests overfit and lowers the upper bound of the generalization error \cite{Breiman2001}.

\subsection{Oblique Forests}
\label{subsec:background_oblique_forests}
    Sparse Projection Oblique Randomer Forests (\Sporf), is a recent modification to random forest that has shown improvement over axis-aligned random forests and other oblique forests that compute linear combinations of features \cite{SPORF,tomita2,Menze2011-obliquerf}.
    Recall that \Rf\ split nodes partition data along the coordinate axes by comparing the projection $e_j\T x$ of observation $x$ on standard basis $e_j$ to a threshold value $\tau$. \Sporf\ generalizes the set of possible projections, allowing for the data to be partitioned along any linear combination of axes specified by the sparse vector $a_j \in \RR^p$. The partition 
    
    \begin{align*}
        S^L_\theta &= \{(x_i,y_i)\:|\:a_j\T x_i < \tau, (x_i,y_i) \in S\}, \\
        S^R_\theta &= \{(x_i,y_i)\:|\:a_j\T x_i \geq \tau, (x_i,y_i) \in S\}
    \end{align*}
    
    follows from our choice of $\theta = \{a_j,\tau\}$, where the entries of $a_j$ vector entries are defined as follows (here  $a_{ij}$ is the ith entry of $a_j$):
    
        \begin{equation*}
            a_{ij} = \begin{cases} 
                1 \quad \text{with prob. $\frac{1}{2s}$} \\
                0 \quad \text{with prob. $1 - \frac{1}{s}$} \\
                -1 \quad \text{with prob. $\frac{1}{2s}$}
            \end{cases}
        \end{equation*}
        
    Then $\theta$ is chosen in the same manner as in axis-aligned Random Forests. All other aspects of \Sporf\ are the same as \Rf.

\section{Methods}
\subsection{Sampling Projections from a Dictionary}
    To move towards manifold forests, we observe that random and oblique forests both sample atoms from a dictionary to create their projected feature values, which are compared with threshold $\tau$ to determine the partition. In axis-aligned random forests, the dictionary, $\mc{A} = \{e_j\}_{j=1}^p$ is the set of points along the p-dimensional hypercube, i.e. standard basis vectors in $\mathbb{R}^p$. Then at every node, atoms $e_i \in \mc{A}$ from the dictionary are sampled $d$ times. 
    
    Similarly, in oblique forests, let the dictionary $\mc{A}$ be the set of vectors (atoms) $\{a_j\}$, each atom a $p$-dimensional vector defining a possible projection $a_j\T x$. In \Sporf, the dictionary $\mc{A}$ can be much larger then that of Random Forests, because it includes, for example, all 2-sparse vectors.
    At each split node, \Sporf\ samples $d$ atoms from $\mc{A}$ according to a specified distribution. By default, each of the $d$ atoms are randomly generated with the number of non-zero elements drawn from a Poisson distribution with a specified rate $s$. Then, each of the non-zero elements are uniformly randomly assigned either $+1$ or $-1$.
    Note that although the size of the dictionary for \Sporf\ is $3^p$ (because each of the $p$ elements could be $-1$, $0$, or $+1$), the atoms are sampled from a distribution heavily skewed towards sparsity controlled by the $s$ term.

\subsection{Random Projection Forests on Manifolds}

In the structured data setting, the dictionary of atoms $\mc{A} = \{a_j\}$ is modified to take advantage of \textit{a priori} knowledge of feature locality on the underlying manifold on which the data lie. We call this `Manifold Oblique Random Forest' (\Mf). This modification constrains the space of random projection decision trees which can be learned in order to better suit certain classification tasks where relations between features may add information. By constructing features in this way, \Mf\ learns low-level features in the structured data, such as corners in images or spikes in time-series.

As in \Sporf, let $\mc{A}$ be a dictionary of $m$, $p$-dimensional atoms with probability density or mass function $f_\mc{A}$ over the $m$ atoms. Each atom $a_j \in \mc{A}$ projects an observation $x_i$ to a real number $a_j^T x_i$, where nonzero elements of $a_j$ effectively weight and sum features.
At each node in the decision tree, \Mf\ selects the best split according to the Gini index over each candidate atom and threshold pair. It has the same partition functions, $S^L_\theta$ and $S^R_\theta$ as SPORF presented in \ref{subsec:background_oblique_forests}. What changes when going from SPORF to MORF? We present two generalizations, selection of the non-zero indices and weights, which allow one to specify any type of manifold structure they want. 

\paragraph{Selection of Nonzero Indices}
    While the nonzero indices of each atom in \Sporf\ were mutually independent of one another, the key aspect of \Mf\ lies in the user-specified restriction of possible atoms to take advantage of feature locality (e.g. nonzero indices are dependent). Relations between data features, as in feature locality, may be abstractly encoded as a sampling graph in which each feature represents a node and an edge between two features (an adjacency) permits an atom in $\mc{A}$ with nonzero weights on both of those features. In Figure \ref{fig:intuition_manifold_sampling}, if no two features are adjacent (none related), we recover \Rf (Fig 1a). If all features are pairwise adjacent (all combinations are possible), we recover \Sporf (Fig 1b). At a high-level, \Mf\ introduces user-defined networks in between those of \Rf\ and \Sporf\ allowing for some relations but not others (Fig 1c).

    \begin{figure}[!htbp]
        \centering
        \includegraphics[width=0.75\linewidth]{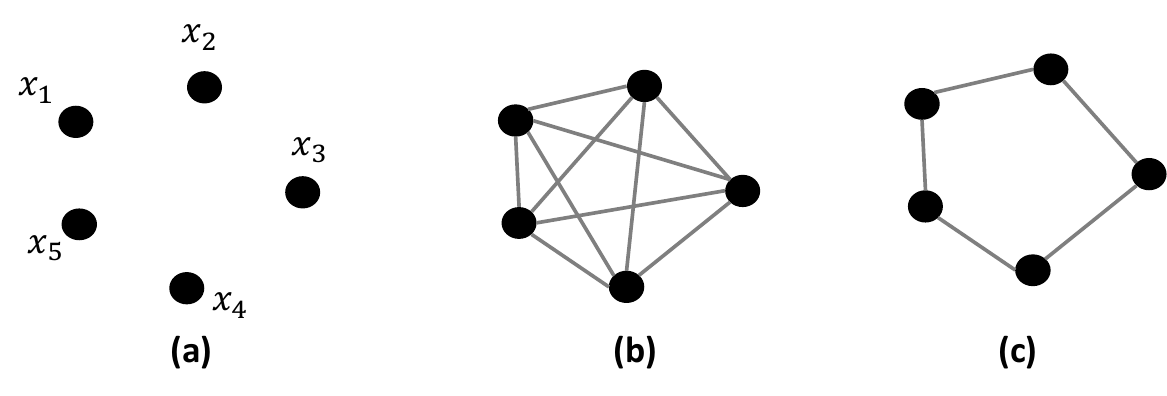}
        \caption{Intuition of manifold sampling for non-zero indices shown with a 5-dimensional data sample, $x = [x_1, x_2, x_3, x_4, x_5]^T \in \mathbb{R}^5$. Sampling non-zero indices can be abstractly represented as a graph. In axis-aligned random forests (a), at every node, samples from the dictionary are drawn where only one non-zero weight is drawn (i.e. the standard basis vector). This is represented by a completely disconnected graph;. In \Sporf\ (b) any combination of non-zero indices is possible, indicated by a fully-connected graph; each sample can effectively form a linear combination of potentially all features. In practice, the sampled projection vector is sparse, which can be represented by the edge weights being extremely small ($\frac{1}{2s}$). In \Morf\ (c) prior information about the structure of $x$ can be leveraged to constrain the dictionary of possible projection vectors. For example, in natural images, we expect adjacent pixels to be correlated and $x_1, x_2, x_5$ may represent adjacent pixels in a vectorized image. Thus, we would sample non-zero indices from these "patches" of the image.}
        \label{fig:intuition_manifold_sampling}
    \end{figure}
    
\paragraph{Selection of Nonzero Weights} In \Rf, all non-zero weights have value 1. In \Sporf, all non-zero weights have value 1, or -1. In our settings \Morf, all non-zero weights will have value 1. For the setting of natural images, this is equivalent to taking the summation operator of a small "patch". These weights can be set in a more general fashion. For example, one can set weights according to a Gaussian kernel, where values are higher in the center of the patch and lower towards the edge of the patch. We do not consider these cases, but discuss their implications in Section \ref{sec:discussion}.
 
\subsubsection{Examples of Projection Dictionaries for Manifolds}
\label{subsubsec:example_projection_manifolds}
    For our applications, we provide a concrete implementation targeted at translation equivariant feature locality in 1D and 2D, such as time series and images respectively. Based on assumptions about the data manifold, one can specify the graph to sample non-zero indices. Assume there exists a similarity matrix, $S$, induced by the manifold dictating how features are related to each other in a data sample, $x_i$. For example, in natural images, $S$ would be a matrix in $\RR^{W \times H}$ with width, W, and length, H. So randomly sample $i \in \{1,...,W\}$ and $j \in \{1,...,H\}$. Then $S_{ij}$ is the ijth entry of the similarity matrix, corresponding to the ijth feature in $x_i$. We would then randomly sample k-hop neighbors of $S_{ij}$ to form a "patch", and then combine these features using the summation operator (these features can be combined with different weights, which is considered in the Discussion). This patch vectorized would form our projection vector, $a_i$. $\langle x_i^T, a_i \rangle$ would give us a candidate feature value to split on. In practice we do not actually know the similarity matrix induced by the manifold, but we can leverage prior information. For example, in natural images, nearby pixels One can specify the possible widths and heights of sampled patches based on knowledge of the data manifold.
    
    
    In natural images, at each split node, a set of atoms are randomly sampled to produce candidate features across observations. \Mf\ accepts hyperparameters defining the minimum and maximum number of patch rows $\{h_{min}$, $h_{max}\}$ and columns $\{w_{min}, w_{max}\}$, respectively. To sample a patch, first the number of rows $h$ and columns $w$ are independently and uniformly sampled between respective minima and maxima (inclusive). As columns are contiguous, a reference leftmost column in the unraveled matrix is sampled as $u \sim \mathcal{U}\{-w+1, W\}$. If 2D locality is specified, then a reference upper row is sampled as $v \sim \mathcal{U}\{-h+1, H\}$. In both cases, the reference column and row may be outside of the matrix so that each feature has an equally likely chance of being included in a patch. The region outside of the matrix is ignored, effectively a zero-padded boundary. The algorithm pseudocode is equivalent to that of \Sporf\ except for the distribution $f_\mc{A}$ described above which can be seen in Appendix \ref{appendix:pseudo}.
    
    In multivariate time-series, the features of a single observation $x_i \in \RR^p$ can also be viewed as organized into a 2D matrix in $\RR^{S \times T}$, where $S * T = p$. Each feature is indexed by a row and column over the number of sensors (S) and time points (T). In this case, only 1D locality might be beneficial where contiguous time points are correlated. However, unless the sensors are ordered in a meaningful manner, there is no reason to suspect locality along the sensor axis (i.e. columns). Again the parameters of the patch can be specified according to the data manifold.

    In our experiments, the atom weights were limited to values of $1$ and $0$ to limit combinatorial complexity but domain-specific atom design may be desired in some settings along with further task-specific atoms. For graph-valued data, one may consider sampling a collection of neighboring edges or nodes \cite{wu_comprehensive_2019}. For spatial related data, one can consider sampling a colleciton of nearby points in Euclidean or Riemannian space \cite{Saha2021-spatialrf}.

\subsection{Feature Importance}

    One of the benefits to decision trees is that their results are fairly interpretable in that they allow for estimation of the relative importance of each feature. Many approaches have been suggested \cite{Breiman2001, Lundberg2017AUA}, and here we introduce a projection forest specific metric which counts the the number of times a given feature was used in projections across the ensemble. Formally, a decision tree $T$ in the trained forest $\texttt{F}$ contains a set of split nodes, where each node $s \in T$ is associated with an atom $a_s^*$ from the dictionary of atoms in the forest $\mathcal{A}_\texttt{F}$ and a threshold that partition the feature space according to the projection $a_s^{*T} x$. Thus, the indices corresponding to nonzero elements of $a_s^*$ indicate important features used in the projection. The importance of feature $k$, denoted $\pi_k$, is calculated as
    
    \begin{align*}
        \pi_k = \frac{1}{|\mathcal{A}_\texttt{F}|} \sum_T \sum_{s \in T_S} \mathbb{I}(a_{sk}^* \neq 0),
    \end{align*}
    
    the number of times it is used in a projection, across all decision trees and split. These counts represent the relative importance of each feature in making a correct classification. Such a method applies to \Rf,\Sporf, and \Mf\ although different results between them would be expected due to different dictionary distributions.

\section{Theoretical Results}
Random forest algorithms have been historically difficult to analyze theoretically, both from a statistical perspectives as well as algorithmic. However, there is a large body of literature making assumptions and modifications on top of Breiman's random forest algorithm \cite{Breiman2001} from which theoretical analyses are tractable \cite{wager_adaptive_2016, meinshausen_quantile_2006, biau_consistency_nodate, denil_narrowing_nodate, biau_analysis_2012, lin_random_2006, scornet_consistency_2015, wager_estimation_2018}. Here, we provide insights on oblique forests, such as \Sporf\, and \Mf\, by expanding upon the axis-aligned forest statistical results in \citet{athey_generalized_2018} and algorithmic results in \citet{louppe2014understanding}.

\subsection{Classifier Consistency}
\citet{athey_generalized_2018} present a seminal paper specifying some minor distributional assumptions and algorithmic conditions for their generalized random forest algorithm to provide a consistent estimate $\hat{\theta}(x)$ of $\theta(x)$, where $\theta(x)$ is defined as the solution to some estimating equation $\EE[\psi_{\theta(x)}(Y_i) | X_i = x] = 0$ for all $x \in \mathcal{X}$ and $\psi_{\theta(x)}$ is a score function. A consistent estimate converges to the true estimand in probability, as the sample size $n$ approaches infinity.

Fundamentally, the consistency of the generalized random forest comes from each tree partitioning the feature space into a set of hyper-rectangles (a bijective map with the set of tree leaves) whose radii go to zero as the sample size grows but slow enough such that they are populated with sets of sizes approaching infinity. This is the same logic behind the consistency of the k-nearest neighbors classifier \cite{dgl} and indeed random forests are effectively adaptive nearest neighbor classifiers \cite{lin_random_2006}.

Although oblique decision trees do not create hyper-rectangular partitions, they do partition the feature space into a finite number of (possibly unbounded) convex polytopes (see Appendix \ref{appendix:hyper}). Each polytope region responds to a leaf node with a constant classification label per leaf. That oblique trees yield convex polytopes which are not necessarily hyper-rectangles is the only difference compared to axis-aligned trees, and the basis as to why the consistency results of \citet{athey_generalized_2018} can be extended to the oblique setting.

\phantomsection\label{subsec:consis}
We show that a main result of \citet{athey_generalized_2018} holds for oblique random forests, such as \Sporf\, and \Mf. As a corollary, posterior probability estimates are consistent. Thus, an oblique random forest under appropriate conditions admits a consistent classification rule and so its error converges to the minimum expected (Bayes) error (see Appendix \ref{appendix:proofs} for full proofs).  We repeat Specification \ref{spec:athey} made by \citet{athey_generalized_2018} below for reference. Specification \ref{spec:oblique} is new and restricts the set of possible dictionaries. The full proofs are detailed in Appendix \ref{appendix:proofs} along with the technical and minor distributional Assumptions 1A-6A.

\paragraph{Specifications}
\begin{enumerate}
    \item \label{spec:athey} All trees are symmetric, in that their output is invariant to permuting the indices of training examples; make balanced splits, in the sense that every split puts at least a fraction $\textit{w}$ of the observations in the parent node into each child, for some $\textit{w} > 0$; and are randomized in such a way that, at every split, the probability that the tree splits on the j-th feature is bounded from below by some $\alpha > 0$. The forest is honest and built via subsampling with subsample size $s$ satisfying $s/n \rightarrow 0$ and $s \rightarrow \infty$ \cite{athey_generalized_2018}.
    
    \item \label{spec:oblique} The oblique dictionary $\mathcal{A}$ is finite and contains the set of standard basis vectors $\{e_i\}_{i=1}^p$, each with a fixed nonzero probability of being selected at each split node.
\end{enumerate}

\paragraph{Assumptions}
\begin{enumerate}
    \item \label{asmptn:dep} There exists a density $f$ over $\mathcal{X}$ that is bounded away from zero and infinity. That is, for all $x \in \mathcal{X}$ there exists a $\varepsilon > 0$ such that $\varepsilon < f(x) < \frac{1}{\varepsilon}$.
    \item \label{asmptn:lipschitz} For all $y \in \mathcal{Y}$, $P(Y = y \mid X = x)$ is Lipschitz continuous in $x \in \mathcal{X}$.
\end{enumerate}

\textit{Honesty}, introduced in Specification \ref{spec:athey}, is a mild condition that removes bias from the leaf estimates by requiring the set of training examples used to learn the structure of the tree to be independent of the set of examples used at the leaf nodes for estimation \cite{biau_consistency_nodate,denil_narrowing_nodate,wager_adaptive_2016,athey_generalized_2018}. In practice this is done using a holdout set per tree and can be beneficial in some cases \cite{wager_estimation_2018}. Alternatively, this sample splitting can be performed more naturally using the out-of-bag samples from bootstrapping. With the addition of Specification \ref{spec:oblique}, the following theorem extends the results of \citet{athey_generalized_2018} to oblique regression forests in addition to axis-aligned forests.

\begin{theorem}\label{theorem:consis}
Under Assumption \ref{asmptn:dep} and Assumptions 1A-6A \cite{athey_generalized_2018}, the estimate $\hat{\theta}(x)$ from the generalized random forest of \citet{athey_generalized_2018} incorporating oblique splits and built to Specifications \ref{spec:athey}-\ref{spec:oblique} is consistent, i.e. $\hat{\theta}_n(x) \overset{P}{\rightarrow} \theta(x)$ as $n \rightarrow \infty$.
\end{theorem}

The conditional mean $\theta(x) = \EE[Y_i | X_i = x]$ is a valid estimand for the generalized random forest algorithm and the empirical estimator coincides with that of Breiman's regression forest \cite{athey_generalized_2018}. So, in the classification setting one may readily estimate the class-conditional posterior in terms of a conditional mean $\theta(x, y) = \EE[\II[Y_i = y] | X_i = x] = P(Y_i = y | X_i = x)$ for all $y \in \mathcal{Y}$ in each leaf node. We note that this choice of estimand satisfies Assumptions 2A-6A, see Appendix \ref{appendix:proofs} for details, with only Assumption 1A remaining as a true assumption. Thus, from Theorem \ref{theorem:consis} we obtain the following corollary and classification Theorem with Assumption 1 incorporated explicitly.

\begin{corollary}\label{corollary:consis}
Under Assumptions \ref{asmptn:dep}-\ref{asmptn:lipschitz},  posterior estimates from an oblique classification random forest built to Specifications \ref{spec:athey}-\ref{spec:oblique} are consistent, i.e. $\hat{p}_n(x; y) \overset{P}{\rightarrow} P(Y_i = y | X_i = x)$ as $n \rightarrow \infty$.
\end{corollary} 

\begin{theorem}\label{theorem:consis_class}
Under Assumptions \ref{asmptn:dep}-\ref{asmptn:lipschitz}, the classification rule from a oblique classification random forest built to Specifications \ref{spec:athey}-\ref{spec:oblique} is consistent, i.e. $L_n  \overset{P}{\rightarrow} L^*$ as $n \rightarrow \infty$.
\end{theorem}

Note that these results apply to both oblique forests with unstructured atoms such as \Sporf\, as well as those with structured atoms such as \Mf. This Lipschitz assumption is a frequent one taken in the literature on random forests \cite{athey_generalized_2018}. It intuitively makes sense \textit{a priori} that small deviations in $x$ should lead to small deviations in the class probability. As in \citet{athey_generalized_2018}, however, this theorem is limited to continuous-valued features which rules out certain classes of data.

\subsection{Training Time Complexity}\label{sec:complexity}
Theoretical analyses are difficult without making assumptions on the data as a tree's possible structure occupies a combinatorially large space and the worst case is too large to be helpful in typical scenarios. We extend the work of \citet{louppe2014understanding} and examine a simplified setting in which the possible sizes induced at each partition node are equally likely. It has been posited and supported empirically that this is a lower bound for the true average case in a \Rf\, \cite{louppe2014understanding}. One reason that worse-than-average cases may occur is that when none of the candidate features are informative, edge splits are frequent and lead to deep trees \cite{louppe2014understanding}. The candidate features in \Sporf\, and \Mf\, are combinations of individual features and we expect this greater flexibility to reduce the chance that no candidate features are informative.

Let a forest have $T$ trees, $d$ candidate features at each split node, and $n$ training samples. At each split node in \Rf, the complexity is $O(dn \log n)$ to sort observations along each feature using an optimal sorting algorithm \cite{louppe2014understanding}. In the average case, the time complexity for \Rf\, is then $\mathcal{O}(Tdn \log^2 n)$ \cite{louppe2014understanding}. \Mf, like \Sporf, utilizes sparse matrix multiplication to compute weighted sum while sorting observations at each split node. Thus, letting $H$ and $W$ denote the maximum height and width of a patch in \Mf, the time complexity for \Mf\, is $\mathcal{O}(Td H W n \log^2 n)$, because for each of the $d$ features we must make $HW$ multiplication operations. However, as we will show empirically, \Mf\, can find better partitions and thus learn smaller trees. 

It is worth noting that because our analysis is based on the framework devised by \cite{athey_generalized_2018}. The assumptions are rather strict and it is not necessary in all cases for example that the joint density of the explanatory variables is uniformly bounded away from zero and infinity. Nevertheless, our results show that structured oblique methods, such as \Mf\ fit into the theory posed by Athey et al. Moreover, the theory posed by Athey et al. are the basis of the heavily used Generalized Random Forest method and code. Although the study of \Mf\ 's convergence behavior is beyond the scope of this paper, empirically it seems that \Mf\ is able to learn less complex trees compared to RF.


\section{Simulation Experiments}
We examine the performance of \Mf\ in terms of predictive accuracy and runtime in three simulations highlighting 1D and 2D manifolds. In all cases. \Mf\ outperforms methods that do not consider feature locality as well as ConvNets in some cases.

\subsection{Three simulated manifolds}
\label{sec:sims_abc}
    We evaluated \Mf\ in three simulation settings to show its ability to take advantage of structure in data. \Mf\ was compared to a set of traditional classifiers (and \Sporf) that learn from the raw features. For each experiment, we used our open source implementation of \Mf\, as well as \Sporf\, and the \Rf\, implementation contained in the \Sporf, each with 500 trees. Other classifiers were run from the Scikit-learn Python package \cite{scikit-learn} and the gradient boosted tree XGBoost (XGB) was run using its Python implementation \cite{xgboost_2016}. Additionally, we tested against a Convolutional Deep Network (ConvNet) built using PyTorch \cite{paszke2017automatic} with two convolution layers, ReLU activations, and maxpooling, followed by dropout and a densely connected hidden layer.
        
    Method hyper-parameters were left as defaults except for the ConvNets and \Mf\, which are each specific to the structure of the data and so must be changed. Thus, a well-performing ConvNet architecture was selected and \Mf\, minimum and maximum patch sizes were optimized using a grid search over a range of potential values as well as well as the number of features considered per split (\textit{mtry}) which should vary with the patch size. See Appendix \ref{appendix:hyper} for details on the hyperparameters and network architectures across experiments.

    Experiment (A) is a non-Euclidean cyclic manifold example inspired by \citet{Younes2018DiffeomorphicL} in which the discriminating information is solely contained in the structure of the data. Each observation is a discretization of a circle's perimeter into a one dimensional feature vector with 100 features and two non-adjacent segments of 1’s in two differing patterns: class 1 features two segments of length five, while class 2 features one segment of length four and one of length six. Because features are arranged on a circle,  segments can wrap around the cyclic feature vector. Figure \ref{fig:6panel}(A) shows examples from the two classes and classification results across various sample sizes.
    
    Experiment (B) is a simple $28\times28$ binary image classification problem. Images in class 0 contain randomly sized and spaced \textit{horizontal} bars while those in class 1 contain randomly sized and spaced \textit{vertical} bars. For each sampled image, $k \sim \text{Poisson}(\lambda = 10)$ bars were distributed among the rows or columns, depending on the class. The distributions of the two classes are identical if a 90 degree rotation is applied to one of the classes and so a classifier cannot be learned without learning the structure of the data. Figure \ref{fig:6panel}(B) shows examples from the two classes and classification results across various sample sizes.
    
    Experiment (C) is a signal classification problem highlighting the presence of structure in time series data. One class consists of 100 values of Gaussian noise independent and identically distributed (iid) while the second class has an added exponentially decaying unit step ($u$) beginning at time 20. 
    
    \begin{align*}
        X_t^{(0)} &= \epsilon\\
        X_t^{(1)} &= u(t-20)\exp^{(t-20)} + \epsilon, \qquad   
        \epsilon \iid \mathcal{N}(0,1)
    \end{align*}
    
    Figure \ref{fig:6panel}(C) shows examples from the two classes and classification results across various sample sizes.
    
    \begin{figure}[!htb]
        \centering
        \includegraphics[width=\linewidth]{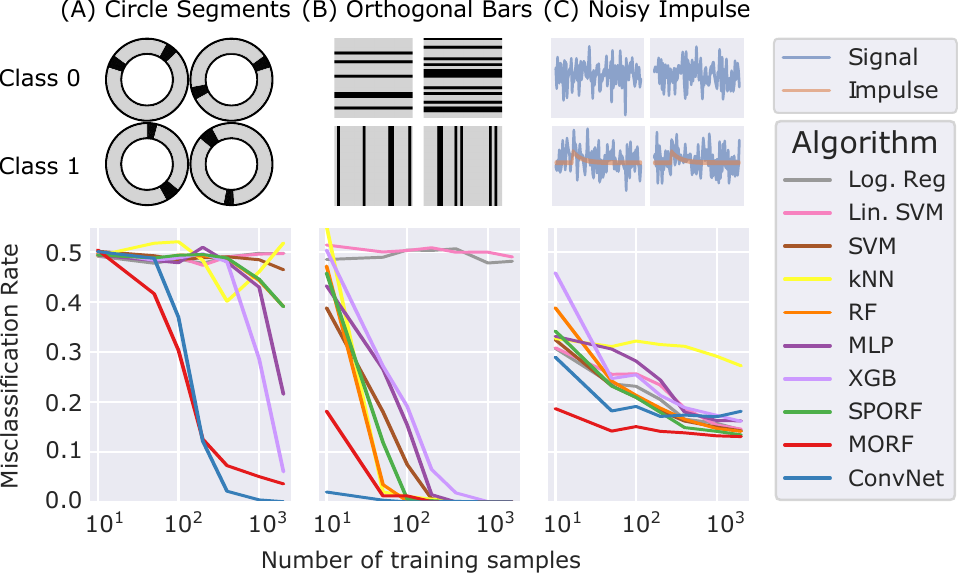}
        \caption{\Mf\, outperforms other algorithms in three two-class classification settings when considering a small number of samples. Upper row shows examples of simulated data from each setting and class. Lower row shows misclassification rate in each setting, tested on 10,000 test samples. \textbf{(A)} Two segments in a discretized circle. Segment lengths vary by class. \textbf{(B)} Image setting with uniformly distributed horizontal or vertical bars. \textbf{(C)} White noise (class 0) vs. exponentially decaying unit impulse plus white noise (class 1). We also observe that ConvNets perform generally better for very large sample sizes (e.g. $>>10^3$), as expected.}
        \label{fig:6panel}
    \end{figure}
    
    In all three simulation settings, \Mf\, outperforms all other classifiers that ignore the local structure, doing especially better at low sample sizes. As compared with ConvNets, \Mf\ sometimes does better, and sometimes worse. The variance across five repeated runs was negligible across sample sizes. The performance of \Mf\ and ConvNets are particularly good in the discretized circle simulation for which most other classifiers perform at chance levels. \Mf\ dominates in the signal classification problem for all sample sizes, most likely because of the ability to learn wide patches which approaches the Bayes classifier. Results on these experiments using uniformly distributed atom weights in between 0 and 1 showed no improvement but increased training time and so were omitted.
    
    We compare the empirical complexity of \Mf, \Sporf, and \Rf\, in Figure \ref{fig:complex_plots}. Although projection forests required more computations at each partition node during training, as outlined in Section \ref{sec:complexity}, \Mf\, is able to learn less complex trees in all cases and sample sizes.

    \begin{figure}[!htb]
        \centering
        \includegraphics[width=\linewidth]{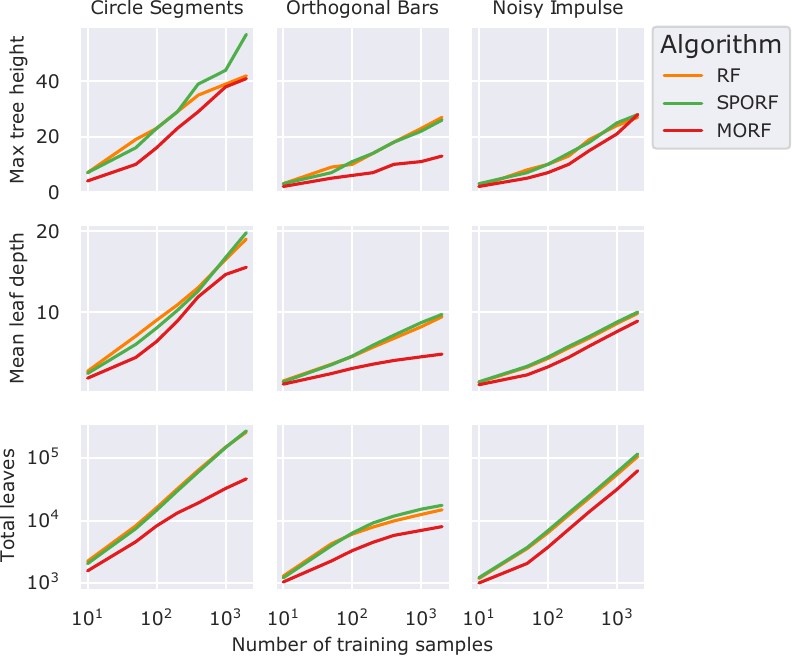}
        \caption{Empirical random forest complexities in each simulation with respect to their maximum tree height, mean leaf depth, and total number of leaves across training sample size. \Mf\, is able to learn simpler trees in all cases due to its restricted projection distribution.}
        \label{fig:complex_plots}
    \end{figure}
    
    Each simulated experiment was run on CPUs and allocated 45 cores for parallel processing. The resulting train and test times as a function of the number of training samples are plotted in Figure \ref{fig:time_plots}. \Mf\, has train and test times slightly longer than those of \Sporf. This cost comes at the benefit of less complex trees, as show in in Figure \ref{fig:complex_plots}. The other method to utilize feature locality, the ConvNet, took noticeably longer to run across simulations for the majority of sample sizes, as seen in Figure \ref{fig:time_plots}. Thus its strong performance in those settings comes at an added computational cost, a typical issue for deep learning methods \cite{dl_cost}.

    \paragraph{Model Misspecification} In the three simulation settings, we explored how \Mf\ is robust to misspecified manifold structure. For example, in (A) Circle Segments, we know that the differentiating factor is the length of the segments being five, or not five. Therefore, one would suspect that the correct patch dimensions should cover that case sufficiently. In Supplementary Figure \ref{fig:morf_model_misspecification}, we see that \Mf\ is relatively stable even when the patch dimensions are not fully the same. The important factor is making sure the information relevant to the task is contained within the possible patch. For full details of the model misspecification experiment, see Supplementary Section \ref{supp_sec:model_miss}.

\subsection{A simulated multivariate time-series}
\label{sec:multivariate_sims}
    Experiment (D) here demonstrates how multivariate data with implicit structure can be learned by \Mf. The simulation is run as in the prior simulations in Section \ref{sec:sims_abc}. We simulate a multivariate time-series problem, where the signals are governed by a linear dynamical system of the form
    
        $$x(t+1) = Ax(t) + B_i u(t),$$
        
    \noindent
    where the governing linear state matrix, $A \in \mathbb{R}^{3 \times 3}$, is the same, but the class separation is modeled by the input matrices, $B_i \in \mathbb{R}^{3 \times 3}$, defined below. The input to the system, $u(t)$, is the same for all classes. The system is set up such that the pair $(A, B)$ is controllable, and that $A$ is marginally stable (i.e. eigenvalues, $|\lambda| \le 1$ for all eigenvalues of A.
    
    \begin{align*}
        B_0 = \begin{pmatrix} 0.5 && 0 && 0 \\ 0 && 0.5 && 0 \\ 0 && 0 && 0.5 \end{pmatrix} \\
        B_1 = \begin{pmatrix} 0.5 && 0.1 && 0.1 \\ 0.1 && 0.5 && 0.1 \\ 0.1 && 0.1 && 0.5 \end{pmatrix}
    \end{align*}
    
    An input $u(t)$ was applied at a random time point selected between the 20th and 40th time point of the simulated time series. A total of 100 time points were simulated and \Mf\ was compared to a suite of other classification algorithms over varying sample sizes. Note that the signals have the exact same dynamics encoded through the $A$ matrix, but the input-output relationships are different. This simulates a setting common in multivariate time-series classification (e.g. EEG, see Section \ref{sec:intracranial_eeg}) where there are signals collected over time that one hypothesizes to be relevant to a task, yet the researcher does not know \textit{a priori} what signals are actually relevant and so they collect as much data as possible. This results in the standard curse-of-dimensionality. However, it is assumed that signals relevant to the task live on a low-dimensional manifold, and the goal is to have a model learn the structure of this manifold for the sake of classification. Figure \ref{fig:sim_multivariate} shows examples from the two classes and classification results across various sample sizes. 

    \begin{figure}[!htb]
        \centering   
        \includegraphics[width=\linewidth]{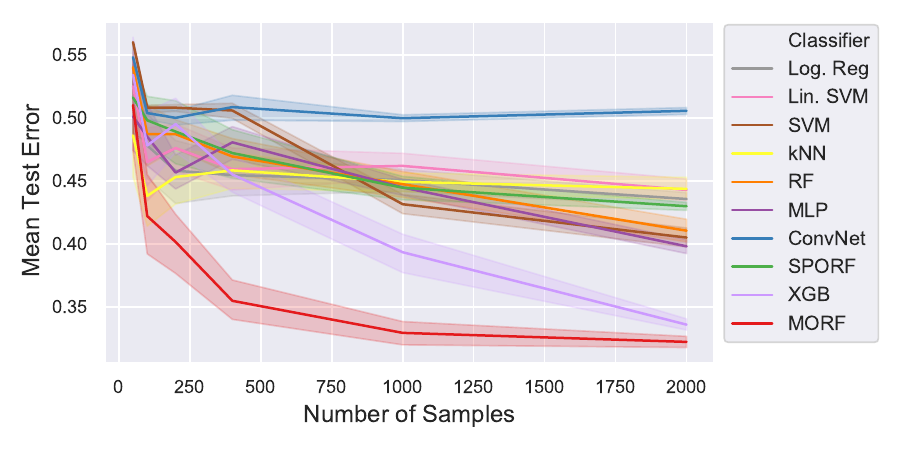}
        \centering
        \caption{\textbf{Multivariate data embedded in a manifold} Algorithm comparisons on classifying two classes of multivariate data. The data consists of samples from two classes of a 3-dimensional stable linear dynamical system with input. Samples are constructed as $X_i \in \mathbb{R}^{3 \times T}$, where now data points over time are correlated. \Mf\ learns the structure significantly faster then the other classifiers, with \Xgboost\ requiring more sample to achieve the same test error rate.}
        \label{fig:sim_multivariate}
    \end{figure}

    A challenging aspect of experiment (D) is that i) the linear state dynamics governed by the A matrix are the same and is considerably larger in norm compared to $B_i$, and ii) the time at which $u(t)$ is applied is random within a small interval. This simulation setting motivates settings where there is a dynamical system with input, such as electroencephalogram (EEG) at rest with dynamics modeled as a linear system, and then a stimulus is applied in the form of input  $u(t)$ \cite{Kerr2017,Li,Li2017,Jones2019}. Depending on the stimulus applied, this might affect the system in different ways through $B_i$. This is very general setting in where the stimulus can be a flash of light, or indication of movement, or even a direct stimuli to evoke seizures.

\section{Experiments on Real Data}
We next evaluated \Mf\ on three real data sets with varying manifold structure, sample sizes and classification goals. In each dataset, there is the notion of either a 1D
or 2D
manifold on which we have \textit{a priori} knowledge of feature locality, time and images respectively. We compare results against a suite of classification algorithms as before.

\subsection{2D Locality: MNIST Digit Classification}
\Mf's performance was evaluated on the MNIST dataset, a collection of handwritten digits stored in 28 by 28 square images \cite{mnist}, and compared to the algorithms used in the simulations. 10,000 images were held out for testing and the remaining 50,000 images were used for training. The results are displayed in Figure \ref{fig:mnist} (top). Hyperparameters are as described in the simulations, see Appendix \ref{appendix:hyper} for details. \Mf\, showed an improvement over the other algorithms as compared to ConvNets.

\begin{figure}[!htb]
    \centering
    \begin{subfigure}[t]{\linewidth}
        \centering
        \includegraphics[width=\textwidth]{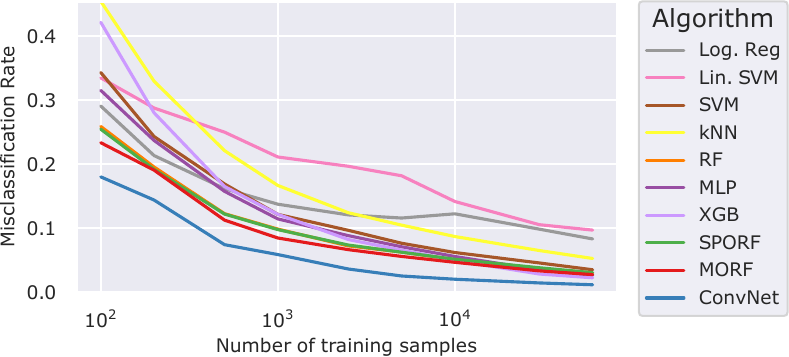}
    \end{subfigure}%
    \hfill
    \begin{subfigure}[t]{\linewidth}
        \centering
        \includegraphics[width=\textwidth]{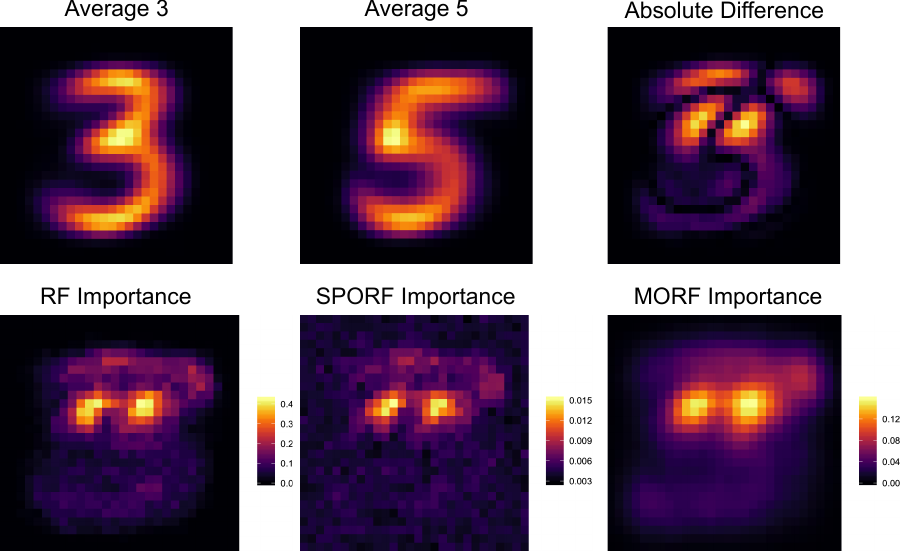}
    \end{subfigure}
    \caption{(\textbf{Top}) \Mf\, performance on the MNIST digit classification problem improves prediction accuracy over all other non-ConvNet algorithms, notably in small sample sizes. (\textbf{Bottom}) The averages all images from MNIST labeled 3 and 5, respectively, and their absolute difference (top row). Feature importance from \Mf\, (bottom right) shows less noise than \Sporf\, (bottom middle) and is smoother than RF (bottom left).}
    \label{fig:mnist}
\end{figure}

We then evaluated the ability of \Mf\ to identify important features in manifold-valued data as compared to \Sporf\ and \Rf. All methods were run on a subset of the MNIST dataset: we only used threes and fives, 100 images from each class.

The feature importance of each pixel is shown in Figure \ref{fig:mnist} (bottom). \Mf\, visibly results in a smoother pixel importance, a result most likely from the continuity of neighboring pixels in selected projections. Although \citet{SPORF} demonstrated empirical improvement of \Sporf\, over \Rf\, on the MNIST data, its projection distribution yields scattered importance of unimportant background pixels as compared to \Rf. Since projections in \Sporf\, have no continuity constraint, those that select high importance pixels will also select pixels of low importance by chance. This may be a nonissue asymptotically, but is a relevant problem in low sample size settings. \Mf\,, however, shows little or no importance of these background pixels by virtue of the modified projection distribution. 

\subsection{1D Locality: Multivariate Time-Series EEG}
\label{sec:intracranial_eeg}\
\Mf's performance was next evaluated on multivariate time-series. Just using raw EEG data, we used \Mf\ to classify movement direction. Compared to a suite of other classification algorithms, \Mf\ is able to achieve a superior performance measured by AUC relative to the other classifiers, as seen in Supplementary Figure \ref{fig:movement_prediction}. We observe that ConvNets have a 0.51 $\pm$ 0.04 AUC, indicating that it overfit to the training data. This is most likely due to the fact that the dataset presented in \cite{Kerr2017} consists of 100-200 trials of data (i.e. samples). We next demonstrate that with some structured feature engineering, one can improve \Mf\ on difficult problems beyond that of any traditional classifier and is superior to ConvNets in low-sample size settings.

We next look at a 91 epilepsy subject dataset \cite{Li862797,7963378,Li2021fragilitybeforeafter} comprised of intracranial electroencephalogram (iEEG) recordings of multiple seizures. Clinicians annotated a subset of implanted electrodes as part of the clinically hypothesized epileptogenic zone (EZ), and then perform subsequent surgery to resect a super set of those regions.
The classification task is to predict surgical outcome of success (seizure free) or failure (seizure recurrence) after a surgical resection is performed on drug resistant epilepsy patients conditioned on the clinically hypothesized EZ regions. If a feature is informative, then it will highly correlate with the clinical EZ when a surgical outcome is successful and vice versa when not successful. In \citet{Li862797,7963378}, a feature of the data, ``neural fragility'' was computed from the data, which is represented as a spatiotemporal heatmap of channels-by-time.

The classification task specifically takes in a data points that are $X_i \in \mathbb{R}^{H \times W}$ of dimension $(20 \times 105)$, where there are 20 quantiles summarizing a distribution of neural fragility and 105 time points around seizure onset. There are 10 quantiles for the neural fragility of electrodes in the clinically hypothesized EZ and 10 quantiles for the neural fragility of electrodes in the rest of the implanted electrodes. The goal is to take $X_i$ and predict $y_i \in \{0, 1\}$, where 0 stands for failed surgical outcome and 1 stands for successful surgical outcome. The classification is setup this way because the clinically annotated EZ electrodes are imperfect and not always representative of the true underlying EZ, which is not observable. Moreover, the multivariate EEG time-series were transformed in this way in order to faciliate comparisons of predictions among subjects with different number of electrodes (ranging from \~30-150). In the original paper, there is preprocessing of the data in the form of a thresholding step, which was done to improve the model performance. However, in this setting, we perform no preprocessing before the classification task to demonstrate fair performance on the "raw" transformed data. For full details on the dataset and clinical problem, we refer the readers to \citet{Li862797,7963378}. 

In Figure \ref{fig:epilepsy_prediction}a, we compare \Mf\ with default hyperparameters, specified in the \Sporf\ package, against standard classification algorithms as in the simulations. The sample sizes for training are relatively low with 60\% of subjects used for training and the rest for the held-out test set. The set of subjects in each cross-validation index are the same across all classifiers, thus enabling a fair comparison. In some folds, it is seen that all the classifiers perform very poorly. This occurs most likely because the data are noisy, the implanted iEEG electrodes are not perfect and some subjects are very difficult to treat. Moreover, there are only 91 subjects total used in this classification task. Even in this challenging classification setting, we observe that \Mf\ is able to achieve a superior performance measured by AUC (Figure \ref{fig:epilepsy_prediction}b). In terms of the Cohen's effect size, \Mf\ is significantly (p-value $\leq 0.05$) more accurate than all other algorithms besides \Sporf\ per a Wilcoxon paired sign test across the 10 cross-validation folds. We observe that ConvNets perform at chance level on the test set, completely overfitting to the training set in this limited sample size setting.

\begin{figure}[!htb]
    \begin{subfigure}[t]{0.48\textwidth}
        \centering   
        \includegraphics[width=\linewidth]{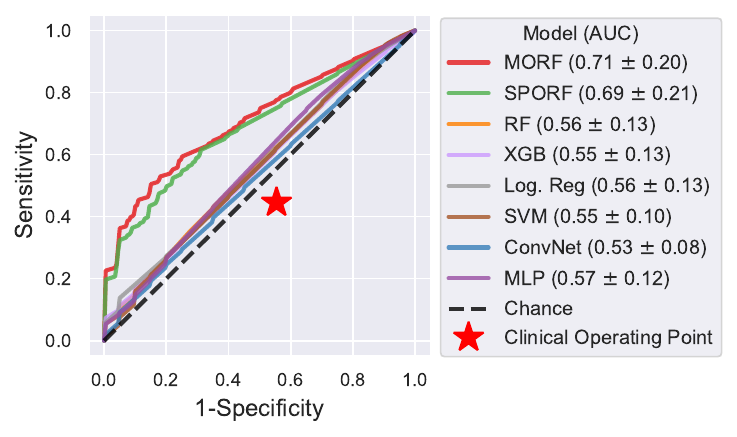}
    \end{subfigure}
    \begin{subfigure}[t]{0.44\textwidth}
        \centering   
        \includegraphics[width=\linewidth]{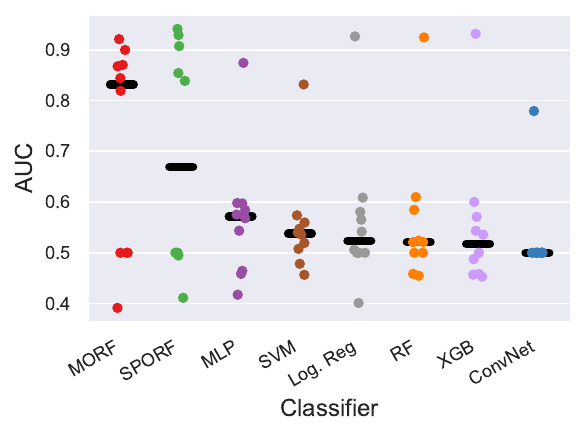}
    \end{subfigure}
    \centering
    \caption{\Mf\ performance on the epilepsy seizure outcome prediction problem improves prediction accuracy over all other algorithms in 10-fold cross validation (CV). \textbf{(Top)} Shows a ROC curve with the mean ROC curve plotted AUC for each classifier. \textbf{(Bottom)} Shows a strip plot of the AUC values in 10-fold CV with the median marked in each setting (solid black line). Note that even the ConvNet and multi-layer perceptron (MLP) perform poorly, most likely because the sample size is very low. Compared to the next best non \Sporf\ classifier, \Mf\ improves over the MLP with a Cohen's D effect size of 0.83 (95\% CI = [2.32, -0.102]).}
    \label{fig:epilepsy_prediction}
\end{figure}

\section{Discussion}
\label{sec:discussion}
The success of sparse oblique projections in decision forests has opened up many possible ways to improve axis-aligned decision forests (including random forests and gradient boosting trees) by way of specialized projection distributions. Traditional decision forests have already been applied to some manifold-valued data, using predefined features to classify images or pixels, and have shown great success, but ignore feature continuity and specialize for specific data modalities. We expand upon sparse oblique projections and introduced manifold-aware projection distributions that exploits prior knowledge of the local topology of a feature space to improve learning rates and accuracy for classification. The open source implementation of \Mf\ subsumes \Sporf\ and provides a flexible classification method for a variety of data modalities and tailored projection dictionaries. We showed in various settings that appropriate domain knowledge can improve the projection distribution and better match ConvNet results (or even outperforming ConvNets significantly) while maintaining interpretability, fast run time, and theoretical justification. 

It is plausible that one could design a loss, or engineer a feature based on the structure of one’s dataset. Then presumably recent extensions of RF would work even better than \Mf\ on structured data (e.g. data that is spatially dependent) \cite{Saha2021-spatialrf}. However, this explicit structure is often not known in practice and hence incorporating dependence structure directly within the loss function is not as useful for an off-the-shelf tool. \Mf\ circumvents this issue by sampling projection vectors from a dictionary and using a recursive surrogate loss instead (i.e. the Gini impurity per leaf). Moreover, \Mf\ is significantly cheaper in terms of computational cost since we are not directly performing optimization on a desired loss function (e.g. see Supplementary Figure S1 on training and testing times).

The flexibility in choices of \Mf's dictionary opens a much larger combinatorial space to sample from compared to a traditional random forest. More complex possibilities may lead to improved performance, but potentially at the cost of greater sampling requirements. Similarly, research into other task-specific projection dictionaries may lead to improved results in computer vision tasks, through better texture quantification for instance, or in other manifold-valued settings such as graphs. Although, unlike ConvNets, \Mf\ is not globally translation equivariant, it can be locally translation equivariant given atoms reminiscent of Gabor filters, for instance. Without local equivariance, discriminative features must be constant in their indices or the training data must be rich enough to fully encapsulate possible observations \cite{kinect_rf}. The \Mf\ projection distributions may also be incorporated into other state of the art Forest algorithms such as \sct{XgBoost}. Additionally, the fact that oblique decision forests lead to partitions of convex polytopes is of interest in that it has been shown that deep nets with Rectified Linear Units (ReLUs) or hard tanh activation layers also partition the feature space into convex polytopes with different linear functions on each region \cite{raghu_expressive_2017}. This shared ``partition and vote scheme'' offers insight into their relationship with one another as well as the functioning of the brain \cite{priebe_modern_2020}

\bibliography{references}

\begin{thebibliography}{48}
\providecommand{\natexlab}[1]{#1}
\providecommand{\url}[1]{#1}
\csname url@samestyle\endcsname
\providecommand{\newblock}{\relax}
\providecommand{\bibinfo}[2]{#2}
\providecommand{\BIBentrySTDinterwordspacing}{\spaceskip=0pt\relax}
\providecommand{\BIBentryALTinterwordstretchfactor}{4}
\providecommand{\BIBentryALTinterwordspacing}{\spaceskip=\fontdimen2\font plus
\BIBentryALTinterwordstretchfactor\fontdimen3\font minus
  \fontdimen4\font\relax}
\providecommand{\BIBforeignlanguage}[2]{{%
\expandafter\ifx\csname l@#1\endcsname\relax
\typeout{** WARNING: IEEEtranN.bst: No hyphenation pattern has been}%
\typeout{** loaded for the language `#1'. Using the pattern for}%
\typeout{** the default language instead.}%
\else
\language=\csname l@#1\endcsname
\fi
#2}}
\providecommand{\BIBdecl}{\relax}
\BIBdecl

\bibitem[Fern\'{a}ndez-Delgado et~al.(2014)Fern\'{a}ndez-Delgado, Cernadas,
  Barro, and Amorim]{JMLR:v15:delgado14a}
M.~Fern\'{a}ndez-Delgado, E.~Cernadas, S.~Barro, and D.~Amorim, ``Do we need
  hundreds of classifiers to solve real world classification problems?''
  \emph{Journal of Machine Learning Research}, vol.~15, pp. 3133--3181, 2014.

\bibitem[Caruana and Niculescu-Mizil(2006)]{Caruana:2006:ECS:1143844.1143865}
R.~Caruana and A.~Niculescu-Mizil, ``An empirical comparison of supervised
  learning algorithms,'' in \emph{Proc. of the 23rd Int. Conf. on Machine
  Learning}, ser. ICML '06.\hskip 1em plus 0.5em minus 0.4em\relax New York,
  NY, USA: ACM, 2006, pp. 161--168.

\bibitem[Hastie et~al.(2001)Hastie, Tibshirani, and
  Friedman]{hastie01statisticallearning}
T.~Hastie, R.~Tibshirani, and J.~Friedman, \emph{The Elements of Statistical
  Learning}, ser. Springer Series in Statistics.\hskip 1em plus 0.5em minus
  0.4em\relax New York, NY, USA: Springer New York Inc., 2001.

\bibitem[Schapire(1990)]{Schapire:1990:SWL:83637.83645}
R.~E. Schapire, ``The strength of weak learnability,'' \emph{Mach. Learn.},
  vol.~5, no.~2, pp. 197--227, Jul. 1990.

\bibitem[Biau et~al.(2008)Biau, Devroye, and Lugosi]{biau_consistency_nodate}
G.~Biau, L.~Devroye, and G.~Lugosi, ``\BIBforeignlanguage{en}{Consistency of
  {Random} {Forests} and {Other} {Averaging} {Classifiers}},''
  \emph{\BIBforeignlanguage{en}{Journal of Machine Learning Research}}, vol.~9,
  pp. 2015--2033, Jun. 2008.

\bibitem[Breiman(2001)]{Breiman2001}
L.~Breiman, ``Random forests,'' \emph{Machine Learning}, vol.~45, no.~1, pp.
  5--32, Oct 2001.

\bibitem[Menze et~al.(2011)Menze, Kelm, Splitthoff, Koethe, and
  Hamprecht]{Menze2011-obliquerf}
B.~H. Menze, B.~M. Kelm, D.~N. Splitthoff, U.~Koethe, and F.~A. Hamprecht, ``On
  oblique random forests,'' in \emph{Machine Learning and Knowledge Discovery
  in Databases}, D.~Gunopulos, T.~Hofmann, D.~Malerba, and M.~Vazirgiannis,
  Eds.\hskip 1em plus 0.5em minus 0.4em\relax Berlin, Heidelberg: Springer
  Berlin Heidelberg, 2011, pp. 453--469.

\bibitem[Tomita et~al.(2017)Tomita, Maggioni, and Vogelstein]{tomita2}
T.~M. Tomita, M.~Maggioni, and J.~T. Vogelstein, ``Roflmao: Robust oblique
  forests with linear matrix operations,'' in \emph{Proc. of the 2017 SIAM Int.
  Conf. on Data Mining}, 2017, pp. 498--506.

\bibitem[Tomita et~al.(2020)Tomita, Browne, Shen, Chung, Patsolic, Falk,
  Priebe, Yim, Burns, Maggioni, and Vogelstein]{SPORF}
\BIBentryALTinterwordspacing
T.~M. Tomita, J.~Browne, C.~Shen, J.~Chung, J.~L. Patsolic, B.~Falk, C.~E.
  Priebe, J.~Yim, R.~Burns, M.~Maggioni, and J.~T. Vogelstein, ``Sparse
  projection oblique randomer forests,'' \emph{Journal of Machine Learning
  Research}, vol.~21, no. 104, pp. 1--39, 2020. [Online]. Available:
  \url{http://jmlr.org/papers/v21/18-664.html}
\BIBentrySTDinterwordspacing

\bibitem[Kontschieder et~al.(2015)Kontschieder, Fiterau, Criminisi, and
  Bulo]{kontschieder_deep_2015}
P.~Kontschieder, M.~Fiterau, A.~Criminisi, and S.~R. Bulo,
  ``\BIBforeignlanguage{en}{Deep {Neural} {Decision} {Forests}},'' in
  \emph{\BIBforeignlanguage{en}{2015 {IEEE} {Int.} {Conf.} on {Computer}
  {Vision} ({ICCV})}}.\hskip 1em plus 0.5em minus 0.4em\relax Santiago, Chile:
  IEEE, Dec. 2015, pp. 1467--1475.

\bibitem[Biau et~al.(2018)Biau, Scornet, and Welbl]{biau_neural_2018}
G.~Biau, E.~Scornet, and J.~Welbl, ``Neural {Random} {Forests},''
  \emph{arXiv:1604.07143 [cs, math, stat]}, Apr. 2018, arXiv: 1604.07143.

\bibitem[{Lepetit} et~al.(2005){Lepetit}, {Lagger}, and
  {Fua}]{rf_keypoint_recog}
V.~{Lepetit}, P.~{Lagger}, and P.~{Fua}, ``Randomized trees for real-time
  keypoint recognition,'' in \emph{2005 IEEE Computer Society Conf. on Computer
  Vision and Pattern Recognition (CVPR'05)}, vol.~2, June 2005, pp. 775--781
  vol. 2.

\bibitem[{Gall} et~al.(2011){Gall}, {Yao}, {Razavi}, {Van Gool}, and
  {Lempitsky}]{rf_hough_detection}
J.~{Gall}, A.~{Yao}, N.~{Razavi}, L.~{Van Gool}, and V.~{Lempitsky}, ``Hough
  forests for object detection, tracking, and action recognition,'' \emph{IEEE
  Transactions on Pattern Analysis and Machine Intelligence}, vol.~33, no.~11,
  pp. 2188--2202, Nov 2011.

\bibitem[{Bosch} et~al.(2007){Bosch}, {Zisserman}, and
  {Munoz}]{rf_image_classification}
A.~{Bosch}, A.~{Zisserman}, and X.~{Munoz}, ``Image classification using random
  forests and ferns,'' in \emph{2007 IEEE 11th Int. Conf. on Computer Vision},
  Oct 2007, pp. 1--8.

\bibitem[{Shotton} et~al.(2011){Shotton}, {Fitzgibbon}, {Cook}, {Sharp},
  {Finocchio}, {Moore}, {Kipman}, and {Blake}]{kinect_rf}
J.~{Shotton}, A.~{Fitzgibbon}, M.~{Cook}, T.~{Sharp}, M.~{Finocchio},
  R.~{Moore}, A.~{Kipman}, and A.~{Blake}, ``Real-time human pose recognition
  in parts from single depth images,'' in \emph{CVPR 2011}, June 2011, pp.
  1297--1304.

\bibitem[{Kontschieder} et~al.(2011){Kontschieder}, {Bulò}, {Bischof}, and
  {Pelillo}]{rf_structured}
P.~{Kontschieder}, S.~R. {Bulò}, H.~{Bischof}, and M.~{Pelillo}, ``Structured
  class-labels in random forests for semantic image labelling,'' in \emph{2011
  Int. Conf. on Computer Vision}, Nov 2011, pp. 2190--2197.

\bibitem[Criminisi et~al.(2012)Criminisi, Shotton, and
  Konukoglu]{Criminisi:2012:DFU:2185837.2185838}
A.~Criminisi, J.~Shotton, and E.~Konukoglu, ``Decision forests: A unified
  framework for classification, regression, density estimation, manifold
  learning and semi-supervised learning,'' \emph{Found. Trends. Comput. Graph.
  Vis.}, vol.~7, no. 2\&\#8211;3, pp. 81--227, Feb. 2012.

\bibitem[Amit and Geman(1997)]{Amit1997-nd}
Y.~Amit and D.~Geman, ``\BIBforeignlanguage{en}{{Shape quantization and
  recognition with randomized trees}},'' \emph{\BIBforeignlanguage{en}{Neural
  Comput.}}, vol.~9, no.~7, pp. 1545--1588, Oct. 1997.

\bibitem[Wu et~al.(2020)Wu, Pan, Chen, Long, Zhang, and
  Yu]{wu_comprehensive_2019}
\BIBentryALTinterwordspacing
Z.~Wu, S.~Pan, F.~Chen, G.~Long, C.~Zhang, and P.~S. Yu, ``A comprehensive
  survey on graph neural networks,'' \emph{IEEE Transactions on Neural Networks
  and Learning Systems}, p. 1–21, 2020. [Online]. Available:
  \url{http://dx.doi.org/10.1109/TNNLS.2020.2978386}
\BIBentrySTDinterwordspacing

\bibitem[Saha et~al.(2021)Saha, Basu, and Datta]{Saha2021-spatialrf}
\BIBentryALTinterwordspacing
A.~Saha, S.~Basu, and A.~Datta, ``Random forests for spatially dependent
  data,'' \emph{Journal of the American Statistical Association}, vol.~0,
  no.~0, pp. 1--19, 2021. [Online]. Available:
  \url{https://doi.org/10.1080/01621459.2021.1950003}
\BIBentrySTDinterwordspacing

\bibitem[Lundberg and Lee(2017)]{Lundberg2017AUA}
S.~Lundberg and S.-I. Lee, ``A unified approach to interpreting model
  predictions,'' \emph{ArXiv}, vol. abs/1705.07874, 2017.

\bibitem[Wager and Walther(2016)]{wager_adaptive_2016}
\BIBentryALTinterwordspacing
S.~Wager and G.~Walther, ``Adaptive {Concentration} of {Regression} {Trees},
  with {Application} to {Random} {Forests},'' \emph{arXiv:1503.06388 [math,
  stat]}, Apr. 2016, arXiv: 1503.06388. [Online]. Available:
  \url{http://arxiv.org/abs/1503.06388}
\BIBentrySTDinterwordspacing

\bibitem[Meinshausen(2006)]{meinshausen_quantile_2006}
\BIBentryALTinterwordspacing
N.~Meinshausen, ``Quantile {Regression} {Forests},'' \emph{Journal of Machine
  Learning Research}, vol.~7, no.~35, pp. 983--999, 2006. [Online]. Available:
  \url{http://jmlr.org/papers/v7/meinshausen06a.html}
\BIBentrySTDinterwordspacing

\bibitem[Denil et~al.(2014)Denil, Matheson, and
  Freitas]{denil_narrowing_nodate}
\BIBentryALTinterwordspacing
M.~Denil, D.~Matheson, and N.~D. Freitas, ``Narrowing the gap: Random forests
  in theory and in practice,'' in \emph{Proc. of the 31st Int. Conf. on Machine
  Learning}, ser. Proc. of Machine Learning Research, E.~P. Xing and T.~Jebara,
  Eds., vol.~32.\hskip 1em plus 0.5em minus 0.4em\relax Bejing, China: PMLR,
  22--24 Jun 2014, pp. 665--673. [Online]. Available:
  \url{http://proceedings.mlr.press/v32/denil14.html}
\BIBentrySTDinterwordspacing

\bibitem[Biau(2012)]{biau_analysis_2012}
\BIBentryALTinterwordspacing
G.~Biau, ``Analysis of a {Random} {Forests} {Model},'' \emph{Journal of Machine
  Learning Research}, vol.~13, no.~38, pp. 1063--1095, 2012. [Online].
  Available: \url{http://jmlr.org/papers/v13/biau12a.html}
\BIBentrySTDinterwordspacing

\bibitem[Lin and Jeon(2006)]{lin_random_2006}
\BIBentryALTinterwordspacing
Y.~Lin and Y.~Jeon, ``Random {Forests} and {Adaptive} {Nearest} {Neighbors},''
  \emph{Journal of the American Statistical Association}, vol. 101, no. 474,
  pp. 578--590, 2006, publisher: [American Statistical Association, Taylor \&
  Francis, Ltd.]. [Online]. Available:
  \url{https://www.jstor.org/stable/27590719}
\BIBentrySTDinterwordspacing

\bibitem[Scornet et~al.(2015)Scornet, Biau, and Vert]{scornet_consistency_2015}
\BIBentryALTinterwordspacing
E.~Scornet, G.~Biau, and J.-P. Vert, ``Consistency of random forests,''
  \emph{Ann. Statist.}, vol.~43, no.~4, pp. 1716--1741, Aug. 2015, arXiv:
  1405.2881. [Online]. Available: \url{http://arxiv.org/abs/1405.2881}
\BIBentrySTDinterwordspacing

\bibitem[Wager and Athey(2018)]{wager_estimation_2018}
\BIBentryALTinterwordspacing
S.~Wager and S.~Athey, ``\BIBforeignlanguage{en}{Estimation and {Inference} of
  {Heterogeneous} {Treatment} {Effects} using {Random} {Forests}},''
  \emph{\BIBforeignlanguage{en}{Journal of the American Statistical
  Association}}, vol. 113, no. 523, pp. 1228--1242, Jul. 2018. [Online].
  Available:
  \url{https://www.tandfonline.com/doi/full/10.1080/01621459.2017.1319839}
\BIBentrySTDinterwordspacing

\bibitem[Athey et~al.(2019)Athey, Tibshirani, and
  Wager]{athey_generalized_2018}
\BIBentryALTinterwordspacing
S.~Athey, J.~Tibshirani, and S.~Wager, ``Generalized random forests,''
  \emph{Ann. Statist.}, vol.~47, no.~2, pp. 1148--1178, 04 2019. [Online].
  Available: \url{https://doi.org/10.1214/18-AOS1709}
\BIBentrySTDinterwordspacing

\bibitem[Louppe(2015)]{louppe2014understanding}
G.~Louppe, ``Understanding {Random} {Forests}: {From} {Theory} to {Practice},''
  Jun. 2015, arXiv: 1407.7502.

\bibitem[Devroye et~al.(1996)Devroye, Györfi, and Lugosi]{dgl}
L.~Devroye, L.~Györfi, and G.~Lugosi, \emph{A Probablistic Theory of Pattern
  Recognition}, 01 1996, vol.~31.

\bibitem[Pedregosa et~al.(2011)Pedregosa, Varoquaux, Gramfort, Michel, Thirion,
  Grisel, Blondel, Prettenhofer, Weiss, Dubourg, Vanderplas, Passos,
  Cournapeau, Brucher, Perrot, and Duchesnay]{scikit-learn}
F.~Pedregosa, G.~Varoquaux, A.~Gramfort, V.~Michel, B.~Thirion, O.~Grisel,
  M.~Blondel, P.~Prettenhofer, R.~Weiss, V.~Dubourg, J.~Vanderplas, A.~Passos,
  D.~Cournapeau, M.~Brucher, M.~Perrot, and E.~Duchesnay, ``Scikit-learn:
  Machine learning in {P}ython,'' \emph{Journal of Machine Learning Research},
  vol.~12, pp. 2825--2830, 2011.

\bibitem[Chen and Guestrin(2016)]{xgboost_2016}
\BIBentryALTinterwordspacing
T.~Chen and C.~Guestrin, ``Xgboost: A scalable tree boosting system,'' in
  \emph{Proc. of the 22nd ACM SIGKDD Int. Conf. on Knowledge Discovery and Data
  Mining}, ser. KDD ’16.\hskip 1em plus 0.5em minus 0.4em\relax New York, NY,
  USA: Association for Computing Machinery, 2016, p. 785–794. [Online].
  Available: \url{https://doi.org/10.1145/2939672.2939785}
\BIBentrySTDinterwordspacing

\bibitem[Paszke et~al.(2017)Paszke, Gross, Chintala, Chanan, Yang, DeVito, Lin,
  Desmaison, Antiga, and Lerer]{paszke2017automatic}
A.~Paszke, S.~Gross, S.~Chintala, G.~Chanan, E.~Yang, Z.~DeVito, Z.~Lin,
  A.~Desmaison, L.~Antiga, and A.~Lerer, ``Automatic differentiation in
  pytorch,'' in \emph{NIPS-W}, 2017.

\bibitem[Younes(2020)]{Younes2018DiffeomorphicL}
\BIBentryALTinterwordspacing
L.~Younes, ``Diffeomorphic {Learning},'' \emph{Journal of Machine Learning
  Research}, vol.~21, no. 220, pp. 1--28, 2020. [Online]. Available:
  \url{http://jmlr.org/papers/v21/18-415.html}
\BIBentrySTDinterwordspacing

\bibitem[Livni et~al.(2014)Livni, Shalev-Shwartz, and Shamir]{dl_cost}
R.~Livni, S.~Shalev-Shwartz, and O.~Shamir, ``On the computational efficiency
  of training neural networks,'' in \emph{Proc. of the 27th Int. Conf. on
  Neural Information Processing Systems - Volume 1}, ser. NIPS'14.\hskip 1em
  plus 0.5em minus 0.4em\relax Cambridge, MA, USA: MIT Press, 2014, pp.
  855--863.

\bibitem[Kerr et~al.(2017)Kerr, Sacr{\'{e}}, Kahn, Park, Johnson, Lee,
  Thompson, Bulacio, Jones, Gonz{\'{a}}lez-Mart{\'{i}}nez,
  Li{\'{e}}geois-Chauvel, Sarma, and Gale]{Kerr2017}
\BIBentryALTinterwordspacing
M.~S.~D. Kerr, P.~Sacr{\'{e}}, K.~Kahn, H.-J. Park, M.~Johnson, J.~Lee,
  S.~Thompson, J.~Bulacio, J.~Jones, J.~Gonz{\'{a}}lez-Mart{\'{i}}nez,
  C.~Li{\'{e}}geois-Chauvel, S.~V. Sarma, and J.~T. Gale, ``{The Role of
  Associative Cortices and Hippocampus during Movement Perturbations.}''
  \emph{Front. Neural Circuits}, vol.~11, p.~26, 2017. [Online]. Available:
  \url{http://www.ncbi.nlm.nih.gov/pubmed/28469563
  http://www.pubmedcentral.nih.gov/articlerender.fcgi?artid=PMC5395558}
\BIBentrySTDinterwordspacing

\bibitem[Li et~al.(2019)Li, Fitzgerald, Hopp, Johnson, Crone, Bulacio,
  Martinez-Gonzalez, Inati, Zaghloul, and Sarma]{Li}
\BIBentryALTinterwordspacing
A.~Li, Z.~Fitzgerald, J.~Hopp, E.~Johnson, N.~Crone, J.~Bulacio,
  J.~Martinez-Gonzalez, S.~Inati, K.~Zaghloul, and S.~V. Sarma, ``{Virtual
  Cortical Stimulation Mapping of Epilepsy Networks to Localize the
  Epileptogenic Zone},'' in \emph{Proc. Annu. Int. Conf. IEEE Eng. Med. Biol.
  Soc. EMBS}, vol. 2019, no.~Cc.\hskip 1em plus 0.5em minus 0.4em\relax
  Institute of Electrical and Electronics Engineers (IEEE), oct 2019, pp.
  2328--2331. [Online]. Available:
  \url{https://pubmed.ncbi.nlm.nih.gov/31946366/}
\BIBentrySTDinterwordspacing

\bibitem[Li et~al.(2017)Li, Gunnarsdottir, Inati, Zaghloul, Gale, Bulacio,
  Martinez-Gonzalez, and Sarma]{Li2017}
A.~Li, K.~Gunnarsdottir, S.~Inati, K.~Zaghloul, J.~Gale, J.~Bulacio,
  J.~Martinez-Gonzalez, and S.~Sarma, ``{Linear time-varying model
  characterizes invasive EEG signals generated from complex epileptic
  networks},'' in \emph{Proc. Annu. Int. Conf. IEEE Eng. Med. Biol. Soc. EMBS},
  2017.

\bibitem[Jones et~al.(2019)Jones, McDermott, Oliveira, O'Brien, Coogan, Lang,
  Moriarty, Dowd, Quinlan, {Mc Ginley}, Dunne, Newell, Porter, Elahi, {O'
  Halloran}, and Shahzad]{Jones2019}
\BIBentryALTinterwordspacing
M.~Jones, B.~McDermott, B.~L. Oliveira, A.~O'Brien, D.~Coogan, M.~Lang,
  N.~Moriarty, E.~Dowd, L.~Quinlan, B.~{Mc Ginley}, E.~Dunne, D.~Newell,
  E.~Porter, M.~A. Elahi, M.~{O' Halloran}, and A.~Shahzad,
  ``\BIBforeignlanguage{eng}{{Gamma Band Light Stimulation in Human Case
  Studies: Groundwork for Potential Alzheimer's Disease Treatment}},''
  \emph{\BIBforeignlanguage{eng}{J. Alzheimers. Dis.}}, vol.~70, no.~1, pp.
  171--185, 2019. [Online]. Available:
  \url{https://pubmed.ncbi.nlm.nih.gov/31156180
  https://www.ncbi.nlm.nih.gov/pmc/articles/PMC6700637/}
\BIBentrySTDinterwordspacing

\bibitem[Lecun et~al.(1999)Lecun, Cortes, and Burges]{mnist}
Y.~Lecun, C.~Cortes, and C.~J. Burges, \emph{The MNIST Database of Handwritten
  Digits}, 1999.

\bibitem[Li et~al.(2021{\natexlab{a}})Li, Huynh, Fitzgerald, Cajigas, Brusko,
  Jagid, Claudio, Kanner, Hopp, Chen, Haagensen, Johnson, Anderson, Crone,
  Inati, Zaghloul, Bulacio, Gonzalez-Martinez, and Sarma]{Li862797}
\BIBentryALTinterwordspacing
A.~Li, C.~Huynh, Z.~Fitzgerald, I.~Cajigas, D.~Brusko, J.~Jagid, A.~Claudio,
  A.~Kanner, J.~Hopp, S.~Chen, J.~Haagensen, E.~Johnson, W.~Anderson, N.~Crone,
  S.~Inati, K.~Zaghloul, J.~Bulacio, J.~Gonzalez-Martinez, and S.~V. Sarma,
  ``Neural fragility as an eeg marker of the seizure onset zone,''
  \emph{bioRxiv}, 2021. [Online]. Available:
  \url{https://www.biorxiv.org/content/early/2021/02/02/862797}
\BIBentrySTDinterwordspacing

\bibitem[{Li} et~al.(2017){Li}, {Inati}, {Zaghloul}, and {Sarma}]{7963378}
A.~{Li}, S.~{Inati}, K.~{Zaghloul}, and S.~{Sarma}, ``Fragility in epileptic
  networks: The epileptogenic zone,'' in \emph{2017 American Control Conf.
  (ACC)}, 2017, pp. 2817--2822.

\bibitem[Li et~al.(2021{\natexlab{b}})Li, Myers, Warsi, Gunnarsdottir, Kim,
  Jirsa, Ochi, Otusbo, Ibrahim, and Sarma]{Li2021fragilitybeforeafter}
\BIBentryALTinterwordspacing
A.~Li, P.~Myers, N.~Warsi, K.~M. Gunnarsdottir, S.~Kim, V.~Jirsa, A.~Ochi,
  H.~Otusbo, G.~M. Ibrahim, and S.~V. Sarma, ``Neural fragility of the
  intracranial eeg network decreases after surgical resection of the
  epileptogenic zone,'' \emph{medRxiv}, 2021. [Online]. Available:
  \url{https://www.medrxiv.org/content/early/2021/07/08/2021.07.07.21259385}
\BIBentrySTDinterwordspacing

\bibitem[Raghu et~al.(2017)Raghu, Poole, Kleinberg, Ganguli, and
  Sohl-Dickstein]{raghu_expressive_2017}
\BIBentryALTinterwordspacing
M.~Raghu, B.~Poole, J.~Kleinberg, S.~Ganguli, and J.~Sohl-Dickstein, ``On the
  {Expressive} {Power} of {Deep} {Neural} {Networks},'' \emph{arXiv:1606.05336
  [cs, stat]}, Jun. 2017, arXiv: 1606.05336. [Online]. Available:
  \url{http://arxiv.org/abs/1606.05336}
\BIBentrySTDinterwordspacing

\bibitem[Priebe et~al.(2020)Priebe, Vogelstein, Engert, and
  White]{priebe_modern_2020}
\BIBentryALTinterwordspacing
C.~E. Priebe, J.~T. Vogelstein, F.~Engert, and C.~M. White,
  ``\BIBforeignlanguage{en}{Modern {Machine} {Learning}: {Partition} \&
  {Vote}},'' \emph{\BIBforeignlanguage{en}{bioRxiv}}, p. 2020.04.29.068460,
  Apr. 2020, publisher: Cold Spring Harbor Laboratory Section: New Results.
  [Online]. Available:
  \url{https://www.biorxiv.org/content/10.1101/2020.04.29.068460v1}
\BIBentrySTDinterwordspacing

\bibitem[Perry et~al.(2021)Perry, Mehta, Guo, Arroyo, Powell, Helm, Shen, and
  Vogelstein]{perry_random_2021}
\BIBentryALTinterwordspacing
R.~Perry, R.~Mehta, R.~Guo, J.~Arroyo, M.~Powell, H.~Helm, C.~Shen, and J.~T.
  Vogelstein, ``Random {Forests} for {Adaptive} {Nearest} {Neighbor}
  {Estimation} of {Information}-{Theoretic} {Quantities},''
  \emph{arXiv:1907.00325 [cs, stat]}, Sep. 2021, arXiv: 1907.00325. [Online].
  Available: \url{http://arxiv.org/abs/1907.00325}
\BIBentrySTDinterwordspacing

\bibitem[Breault et~al.(2019)Breault, Fitzgerald, Sacré, Gale, Sarma, and
  González-Martínez]{10.3389/fnins.2019.00715}
\BIBentryALTinterwordspacing
M.~S. Breault, Z.~B. Fitzgerald, P.~Sacré, J.~T. Gale, S.~V. Sarma, and J.~A.
  González-Martínez, ``Non-motor brain regions in non-dominant hemisphere are
  influential in decoding movement speed,'' \emph{Frontiers in Neuroscience},
  vol.~13, p. 715, 2019. [Online]. Available:
  \url{https://www.frontiersin.org/article/10.3389/fnins.2019.00715}
\BIBentrySTDinterwordspacing

\end{thebibliography}
\bibliographystyle{IEEEtranN}

\clearpage

\beginsupplement
\appendix
\section*{Acknowledgements}
This work is supported by the Defense Advanced
Research Projects Agency (DARPA) Lifelong Learning Machines program through contract FA8650-18-2-7834 and through funding from Microsoft Research. AL is supported by NIH T32 EB003383, the NSF GRFP (DGE-1746891), the Arcs Chapter Scholarship, Whitaker Fellowship and the Chateaubriand Fellowship. The authors have no conflicts of interest to declare.

\section*{Appendices}


\section{Proofs}\label{appendix:proofs}
\subsection{Convex Polytope Partition Results}
As mentioned in Section \ref{subsec:consis}, a random projection tree partitions the feature space into a finite number of (possibly unbounded) convex polytopes. The proof of that is as follows.

\begin{proof}
A convex polytope in $d$ dimensions can be defined as the union of a finite number of halfspaces, where a halfspace is a $d-1$ dimensional surface defined by the linear inequality
\begin{equation*}
    a^Tx \leq b
\end{equation*}
for fixed $a \in \RR^d$ and $b \in \RR$.
In a random projection tree, each split node $i$ partitions the set of points at that node according to such an inequality $a_i^Tx \leq b_i$. Consider the path of $k$ split nodes, including the root, to a leaf $l$ and the set of corresponding halfspace defining $\{(a_i,b_i\}_{i=1}^k$ terms for each split node. We see that in the feature space $S$, the subset that "falls into" leaf $l$ is the solution set to
\begin{equation*}
    A_lx \leq b_l
\end{equation*}
where $A_l = [a_1, \dots ,a_k]^T$ and $b_l = [b_1,\dots,b_k]^T$.

Thus each leaf node forms a convex polytope. Additionally, note that any $x \in S$ will deterministically end up in a leaf node (by classification of $x$) as the tree is of finite depth and that all leaf node convex polytopes are mutually exclusive as the lowest common ancestor of any two leaves forms mutually exclusive sets. If the feature space is unbounded, then at least one partition must be unbounded too. Thus, a tree partitions the feature space into a finite number of possibly infinite convex polytopes.
\end{proof}

\subsection{Consistency Results}
The least we can ask of our classification rule $\{g_n\}_{n=1}^{\infty}$ is for it to be consistent,
$L_n  \overset{P}{\rightarrow} L^*$ as $n \rightarrow \infty$, where $L_n$ and $L^*$ are the expected 0-1 losses of the finite sample rule $g_n$ and the Bayes decision rule $g^*$, respectively. Our results here build upon the results of \citet{athey_generalized_2018} who prove under Assumption \ref{asmptn:dep}, 1A-6A and Specification \ref{spec:athey} that their generalized random forest (GRF) algorithm, which subsumes Breiman's regression forest, provides a consistent estimate $\hat{\theta}_n(x)$ for some quantity $\theta(x)$. The estimand $\theta(x)$ is defined as the solution to the generic estimating equation $M_\theta(x) := \EE[\psi_{\theta}(Y_i) | X_i = x] = 0$ for all $x \in \mathcal{X}$ where $\psi_{\theta}$ is a score function. We begin by restating Assumptions 1A-6A of the GRF algorithm as they are relevant to further results but strongly recommend referring to the original paper \citep{athey_generalized_2018} for additional details.

\paragraph{Assumptions:}
\begin{enumerate}
    \item[1A.] For fixed values $\theta(x)$, we assume that $M_\theta(x)$ is Lipschitz continuous in $x$.
    \item[2A.] When $x$ is fixed, we assume that $M_\theta(x)$ is twice continuously differentiable  in $\theta$ with a uniformly bounded second derivative, and that $\frac{\partial}{\partial(\theta)} M_\theta(x) \mid_{\theta} \neq 0$ for all $x \in \mathcal{X}$.
    \item[3A.] The worst-case variogram of $\psi_\theta (Y)$ is Lipschitz-continuous in $\theta(x)$.
    \item[4A.] The $\psi$-functions can be written as $\psi_{\theta}(Y) = \lambda(\theta(x); Y) + \xi_{\theta}(g(Y))$, such that $\lambda$ is Lipschitz-continuous in $\theta$, $g: \{Y\} \rightarrow \R$ is a univariate summary of $Y$, and $\xi_{\theta} : \R \rightarrow \R$ is any family of monotone and bounded functions.
    \item[5A.] For any weights $\alpha_i(x)$ such that $\sum_i \alpha_i(x) = 1$, the estimation equation returned a minimizer $\hat{\theta(x)}$ that at least approximately solves the estimating equation $||\sum_{i=1}^n \alpha_i(x) \psi_{\hat{\theta}} (Y_i)||_2 \leq C max\{\alpha_i(x)\}$ for some constant $C \geq 0$.
    \item[6A.] The score function $\psi_{\theta} (Y)$ is a negative sub gradient of a convex function, and the expected score $M_\theta(x)$ is the negative gradient of a strongly convex function.
\end{enumerate}

\paragraph{Proof of Theorem \ref{theorem:consis}: Consistent oblique random forests posterior estimates}

Our Theorem \ref{theorem:consis} extends Theorem 3 of the GRF paper \citep{athey_generalized_2018} given the additional Specification \ref{spec:oblique}.
Specifically, the GRF Theorem 3 in \citet{athey_generalized_2018}  result relies on the consistency result of Theorem 1 in \citet{wager_estimation_2018}. We need only to verify foundations of Theorem 1 \citep{wager_estimation_2018} that take into account the use of axis-aligned splits, those being Theorems 3 and 5 of \citet{wager_estimation_2018}. Thus, it suffices to confirm that those results are unchanged under oblique splits and given Specification \ref{spec:oblique} holds.

Theorem 3 \citep{wager_estimation_2018} proves an asymptotic upper bound on the diameter of a leaf, diam$(L(x))$, by applying an asymptotic upper bound result from Lemma 2 \citep{wager_estimation_2018} on the diameter of dimension $j$ in the leaf, diam$_j(L(x))$. The leaf $L(x)$ is a polytope formed from a combination of axis-aligned and oblique splits. Considering only the axis-aligned conditions forming $L(x)$, by the positive probability of splitting on each dimension per Specification \ref{spec:oblique}, the upper bound of Lemma 2 \citet{wager_estimation_2018} holds. As the addition of oblique conditions cannot increase the size of the leaf, the same upper bound holds. Similarly, the diameter diam$(L(x))$ of the leaf is smaller than the diameter of the polytope formed from just axis-aligned conditions. By the diameter bound from Lemma 2 \citep{wager_adaptive_2016} of each feature, the upper bound of Theorem 3 \citep{wager_estimation_2018} holds for the axis-aligned polytope and so also $L(x)$.

Theorem 5 \citep{wager_estimation_2018} hinges on Lemma 4 \citep{wager_estimation_2018} which brings up the concept of a \textit{potential nearest neighbor} (PNN) \citep{wager_estimation_2018, wager_adaptive_2016}.

\begin{defi}
$x_i \in \{x_1, \dots, x_s\} \in \{\RR^p\}^s$ is a \textit{potential nearest neighbor} (PNN) of $x \in \RR^p$, if there is an axis-aligned hyperrectangle containing only $x$ and $x_i$. A $k$-PNN set is a collection of $k$ points and $x$ in an axis-aligned hyperrectangle containing no other points. A predictor $T$ for $x$ is a $k$-PNN predictor if given
\begin{equation*}
    \{z\} = \{(x_1,y_1),\dots,(x_s,y_s)\} \in \{\RR^p \times \mathcal{Y}\}^s,
\end{equation*}
$T$ outputs the average of the $y_i$ among a $k$-PNN set of $x$ with respect to the $x_i$.
\end{defi}

In the case of oblique split decision trees, we have the following result.

\begin{lem}
\label{lem:pnn}
Let $T$ be a decision tree which makes oblique splits (including axis-aligned splits) at each interior node with finite dictionary $\mc{A}$ of $m$ vectors encoding the set of allowable oblique axes. If $T$ has leaves between size $k$ and $2k-1$, then $T$ is a $k$-PNN predictor on $\RR^m$.
\end{lem}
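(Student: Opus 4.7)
The plan is to reduce the oblique setting to the axis-aligned setting via a linear embedding of $\RR^p$ into $\RR^m$, and then apply the axis-aligned PNN argument from \citet{wager_adaptive_2016} essentially unchanged. Define the linear map $\Phi \from \RR^p \to \RR^m$ by
\begin{equation*}
    \Phi(x) \;=\; \bigl(a_1^T x,\, a_2^T x,\, \ldots,\, a_m^T x\bigr)^T,
\end{equation*}
where $\mc{A} = \{a_1, \ldots, a_m\}$ is the finite dictionary of allowable oblique directions. Every split performed by $T$, being of the form $a_j^T x \leq b$ for some $a_j \in \mc{A}$, corresponds under $\Phi$ to the axis-aligned half-space $\{z \in \RR^m \from z_j \leq b\}$. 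Thus an oblique split in the original feature space is literally an axis-aligned split acting on the $j$-th coordinate of the transformed space.

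Next, fix an arbitrary query point $x \in \RR^p$ and follow the root-to-leaf path of $T$ that $x$ traverses, ending at a leaf $L(x)$. The leaf is the intersection of the half-spaces associated with the splits on this path, so its image $\Phi(L(x) \cap \RR^p)$ is cut out in $\RR^m$ by inequalities of the form $z_j \leq b_i$ and $z_j > b_i$ on various coordinates. This is by definition a (possibly unbounded) axis-aligned hyperrectangle $R \subset \RR^m$, and it contains $\Phi(x)$. The training observations $x_i$ that end up in $L(x)$ are precisely those with $\Phi(x_i) \in R$. Because $T$ is assumed to have leaf sizes in $[k,\, 2k-1]$, the hyperrectangle $R$ contains $\Phi(x)$ together with at least $k$ and at most $2k-1$ points of $\{\Phi(x_i)\}_{i=1}^{s}$, and no others. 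Hence these points form an $\ell$-PNN set of $\Phi(x)$ for some $k \leq \ell \leq 2k-1$ in $\RR^m$, and the output of $T$ is precisely the average of the associated $y_i$'s. The standard equivalence used in the proof of Lemma 1 of \citet{wager_adaptive_2016}---that averaging over a leaf whose size lies in $[k, 2k-1]$ inside an axis-aligned hyperrectangle realizes the predictor as a $k$-PNN predictor---then applies verbatim to the transformed point cloud, yielding the claim.

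The main obstacle I anticipate is checking that this reduction is not weakened by degeneracies of $\Phi$. In general $\Phi$ need not be injective, and its image need not be full-dimensional in $\RR^m$; one might worry this prevents calling $R$ an axis-aligned hyperrectangle. However, $R$ is defined purely by coordinate-wise inequalities in $\RR^m$, so it is tautologically a hyperrectangle there, irrespective of where $\Phi(\RR^p)$ sits. The PNN condition and the averaging identity both reference the transformed sample $\{\Phi(x_i)\}$ and the transformed query $\Phi(x)$, neither of which requires $\Phi$ to be invertible. Once this point is made cleanly, the remainder of the argument is a direct transcription of the axis-aligned PNN lemma, and Lemma~\ref{lem:pnn} follows.
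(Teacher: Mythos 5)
Your proposal is correct and follows essentially the same route as the paper: both map the data through $x \mapsto A^T x$ (your $\Phi$) into $\RR^m$, observe that every oblique split becomes an axis-aligned split there so each leaf is an axis-aligned hyperrectangle in the transformed space, and then invoke the standard axis-aligned result that leaves of size between $k$ and $2k-1$ yield a $k$-PNN predictor. Your explicit remark that non-injectivity or rank deficiency of $\Phi$ does not affect the argument is a welcome clarification that the paper leaves implicit.
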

\begin{proof}
Let $\mathcal{X}$ denote the vector space of possible samples, where $x \in \mathcal{X} \subset \RR^p$. Since $\mathcal{A} \in \{\RR^p\}^m$, let $A \in \RR^{p \times m}$ denote the matrix whose columns are the elements of $\mathcal{A}$. Then $\mathcal{B} = A^T\mathcal{X} \subset \RR^m$ is a vector space of dimension at most $\min(p, m)$ in a space of dimension $m$. Bases of $\mathcal{B}$ correspond to bases or oblique combinations of them from $\mathcal{X}$ and so every oblique split in $\mathcal{X}$ is an axis-aligned split in $\mathcal{B}$. The points which fall into a leaf of $T$ are the only points which satisfy the linear system formed by the set of splits, which are the only points that fall into the hyperrectangle in $\mathcal{B}$ defined by that system. As any decision tree making axis-aligned splits with leaves of sizes between $k$ and $2k - 1$ is a $k$-PNN predictor \citep{lin_random_2006}, $T$ is thus a $k$-PNN predictor in $\mathcal{B} \subset \RR^m$.
\end{proof}

In this expanded feature space from which we can view oblique splits as axis-aligned, as in the above proof, we can scale down the marginals to be within $[0,1]$. While this space no longer satisfies the Lipschitz criteria and if $m > p$ may have a density of $0$ at all points outside of the $p$ dimensional subspace, Lemma 4 \cite{wager_estimation_2018} requires neither of these conditions from the original assumptions. So it holds, albeit with the finite constant $m$ instead of $p$. Thus Theorem 5 \citep{wager_estimation_2018} holds with simply a modified constant which doesn't change the final established asymptotics in Theorem 1 \citep{wager_estimation_2018}. So Theorem 1 \citep{wager_estimation_2018} holds in our oblique forest setting and we can extend Theorem 3 of \citet{athey_generalized_2018} to the oblique setting per our Theorem \ref{theorem:consis}.

\paragraph{Proof of Corollary \ref{corollary:consis}: Consistent posterior probability estimates}
In the spirit of \citet{perry_random_2021}, we now prove the corollary of our previous Theorem that the consistent regression estimate results of \citet{athey_generalized_2018} extend to classification and so oblique random forests produce consistent estimation rules $\{p_n(y \mid x)\}_{n=1}^\infty$ of the posterior $P(Y = y \mid X = x)$ which we denote as $p(y \mid x)$. It is important to note that consistency proof is independent of the splitting mechanism at each leaf node and so switching to the Gini impurity score and away from the mean squared error score of the GRF only has the potential to affect convergence rates.

To show that the posterior probability estimates are consistent, we need to show that $p_n(y \mid x) \overset{P}{\rightarrow} {p}(y \mid x)$ as $n \rightarrow \infty$. Let $y$ be the fixed arbitrary class label of interest. Given our data, we seek to estimate the class-specific posterior probability $p(y \mid x)$, equivalent to estimating the conditional mean $\theta(x) := \EE [\II[Y = y] \mid X = x]$ for any $y \in \mathcal{Y}$. For conciseness, in the following proofs we will often drop the notational dependence of $\theta$ on $x$ and $y$ where convenient, letting it implicitly be a function of any fixed pair $(x, y) \in \mathcal{X} \times \mathcal{Y}$. To follow the notation of \citet{athey_generalized_2018}, we frame $\theta(x)$ as the solution to the estimation equation
\begin{equation*}
   M_\theta (x) := \EE[\psi_{\theta}(Y) \mid X = x] = 0
\end{equation*}
where the score function $\psi_{\theta}(Y)$ is defined as
\begin{equation*}
    \psi_{\theta}(Y) := \II[Y = y] - \theta(x).
\end{equation*}
This estimand $\theta(x)$ can be estimated by solution $\hat{\theta}(x)$ to the empirical estimation equation
\begin{equation*}
    \sum_{i=1}^n \alpha_i(x) \psi_{\hat{\theta}}(Y_i) = 0.
\end{equation*}
It follows that 
$$\hat{\theta}(x) = \sum_{i=1}^n \alpha_i(x) \II[Y_i = y]$$
per the expansion
\begin{align*}
    \sum_{i=1}^n \alpha_i(x) \psi_{\hat{\theta}} (Y_i)
    &= \sum_{i=1}^n \alpha_i(x) (\II[Y_i = y] - \hat{\theta}(x)) \\
    &= \sum_{i=1}^n \alpha_i(x)\II[Y_i = y] - \hat{\theta}(x) = 0.
\end{align*}

These weights we learn from a learned random forest. Let a forest be composed of $B$ trees. In a single tree $b$, let $l_b(x)$ denote the set of training examples at the leaf node for which $X$ is placed. Define the weights $\alpha_{ib}(x)$ for that tree as
\begin{equation*}
    \alpha_{ib}(x) := \frac{1}{|l_b(x)|} \II[x_i \in l_b(x)],
\end{equation*}
the normalized indicator of whether or not $x$ and $x_i$ exist in the same leaf.  Thus the forest weights $\alpha_i(x) = \frac{1}{B}\sum_{b=1}^B \alpha_{ib}(x)$ are simply the normalized weights across all trees.

By Theorem 3 of \citet{athey_generalized_2018}, a random forest built according to Specification \ref{spec:athey} and solving an estimation problem satisfying Assumptions 1A-6A yields a consistent estimator. We enumerate these assumption, defined above, and verify that they hold for posterior probability estimate $\hat \theta(x)$.

\begin{enumerate}
    \item[1A.] This remains a true assumption on the distribution and so is restated as Assumption \ref{asmptn:lipschitz}.
    \item[2A.] This is true, as evident in the derivatives
        \begin{equation*}
            \frac{\partial}{\partial\theta} M_\theta(x) = -1
            \quad\text{and}\quad
            \frac{\partial^2}{\partial^2\theta} M_\theta(x) = 0.
        \end{equation*}
    \item[3A.] This is evident in the worse-case variogram for two solutions $\theta(x)$ and $\theta'(x)$ 
        \begin{align*}
            &\gamma (\theta(x), \theta'(x))
            \\&:= \sup_{x \in \mathcal{X}} \{Var(\psi_{\theta}(Y) -\psi_{\theta'}(Y) \mid X = x)\}
            \\&= \sup_{x \in \mathcal{X}} \{Var(\theta'(x) - \theta(x) | X=x)\}
            = 0
        \end{align*}
        which is trivially Lipschitz-continuous.
    \item[4A.] Clearly $\psi_{\theta}(Y)$ is linear in $\theta(x)$ and so is a Lipschitz-continuous function in $\theta(x)$. The other term is 0 in this case.
    \item[5A.] As shown previously shown, the estimation equation is solved to equal 0.
    \item[6A.] This holds true by construction of the convex function $\Psi_{\theta}(Y) := \frac12 (\II[Y=y \mid X = x] - \theta(x))^2$ such that $\psi_\theta(Y) = -\frac{d}{d\theta} \Psi_{\theta}(Y)$, and the strongly convex function $\mathbb{M}_\theta(x) := \frac12 (P(Y=y \mid x) - \theta(x))^2$ where $\quad M_\theta(x) = -\frac{d}{d\theta} \mathbb{M}_\theta(x)$.
\end{enumerate}

This verifies Assumptions 2A-6A from Theorem \ref{theorem:consis} for the finite-sample estimate $\hat p_n(y | x) := \hat \theta(x)$ of $\theta(x) := p(y | x)$. Corollary \ref{corollary:consis} follows, adding Assumption 1A to the required Specifications \ref{spec:athey}-\ref{spec:oblique} from Theorem \ref{theorem:consis}.

\paragraph{Proof of Theorem \ref{theorem:consis_class}: A consistent classification rule.}

Corollary \ref{corollary:consis} established consistency for each posterior probability estimate $p_n(y \mid x)$. We now proceed to show consistency for the classification rule $g_n(x) = \argmax_{y} p_n(y \mid x)$. As before, define $p(y \mid x) := P(Y = y | X = x)$

\begin{lem}
\label{lem:unequal}
Let $x\in \mathcal{X}$ with true, but unknown, unique maximum $y^* := \argmax_y p(y \mid x)$, and define the finite sample estimate $\hat{y} := \argmax_y p_n(y \mid x)$. If $p_n(y \mid x)$ is a consistent estimator for $p(y \mid x)$, then
\begin{equation*}
    P[\hat{y} \neq y^* \mid x] \rightarrow 0 \quad\text{as}\quad n \rightarrow \infty
\end{equation*}
\end{lem}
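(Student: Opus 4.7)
My plan is to exploit the strict uniqueness of the maximizer to produce a positive gap, and then translate finite-sample closeness of the posterior estimates into correctness of the $\argmax$ via a union bound over the finitely many labels.

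First I would set
\begin{equation*}
    \delta \defeq p(y^* \mid x) - \max_{y \neq y^*} p(y \mid x),
\end{equation*}
which is strictly positive by the uniqueness hypothesis on $y^*$, and note that the class label set $\mathcal{Y} = \{1,\dots,K\}$ is finite. The key deterministic observation I would then record is that on the event
\begin{equation*}
    E_n \defeq \bigl\{ \max_{y \in \mathcal{Y}} \lvert p_n(y \mid x) - p(y \mid x) \rvert < \delta/2 \bigr\},
\end{equation*}
one has $\hat{y} = y^*$: for every $y \neq y^*$,
\begin{equation*}
    p_n(y^* \mid x) > p(y^* \mid x) - \delta/2 \geq p(y \mid x) + \delta/2 > p_n(y \mid x),
\end{equation*}
so $y^*$ strictly dominates every competitor in the empirical posterior and thus is the unique $\argmax$. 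Consequently $\{\hat{y} \neq y^* \mid x\} \subseteq E_n^c$.

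Next I would bound $P[E_n^c \mid x]$ via a union bound:
\begin{equation*}
    P[E_n^c \mid x] \leq \sum_{y \in \mathcal{Y}} P\bigl[\,\lvert p_n(y \mid x) - p(y \mid x) \rvert \geq \delta/2 \,\bigm|\, x \bigr].
\end{equation*}
By Theorem~\ref{thm:consis}, each posterior estimate $p_n(y \mid x)$ is consistent for $p(y \mid x)$, so each summand on the right tends to $0$ as $n \to \infty$. Since $K$ is finite, the full sum tends to $0$, yielding $P[\hat{y} \neq y^* \mid x] \to 0$.

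The argument is essentially structural, so I do not anticipate a real obstacle; the only point requiring mild care is that the consistency conclusion of Theorem~\ref{thm:consis} is formulated in probability, so the union bound must be taken over the finitely many labels (rather than uniformly over $x$), and the uniqueness of $y^*$ is what supplies the strictly positive margin $\delta$ needed to convert a small $L_\infty$ error in $p_n(\cdot \mid x)$ into exact recovery of the $\argmax$.
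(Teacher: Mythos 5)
Your proposal is correct and follows essentially the same route as the paper's proof: both exploit the strictly positive margin supplied by the uniqueness of $y^*$, apply a union bound over the finitely many labels, and invoke consistency of the individual posterior estimates to send each term to zero. The only cosmetic difference is that you work with a single uniform gap $\delta$ and an $L_\infty$-type good event, whereas the paper uses per-label gaps $\varepsilon_y = p(y^* \mid x) - p(y \mid x)$ and centers the differences $p_n(y \mid x) - p_n(y^* \mid x)$ directly.
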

\begin{proof}
We omit the conditional for notational brevity by substituting $p(y) := p(y \mid x)$ and $p_n(y) := p_n(y \mid x)$. Then it follows that
\begin{align*}
    P[\hat{y} \neq y^* \mid x] &= 
    P[\max_y p_n(y) > p_n(y^*)]\\
    &=P\left[\bigcup_{y\neq y^*} p_n(y) > p_n(y^*)\right]\\
    &\leq\sum_{y \neq y^*} P[p_n(y) > p_n(y^*)]\\
    &=\sum_{y \neq y^*} P[p_n(y) - p_n(y^*)  > 0]\\
    &=\sum_{y \neq y^*} P[(p_n(y) - p_n(y^*) ) - \\
    &\quad\quad(p(y) - p(y^*)) > p(y^*) - p(y)]
\end{align*}
Let $\varepsilon_y := p(y^*) - p(y)$ and note that $\varepsilon_y > 0$ for all $y \in \mathcal{Y} \setminus \{y^*\}$ since $y^*$ is a unique maximum. Observe that
\begin{align*}
    &\sum_{y \neq y^*} P[(p_n(y) - p_n(y^*) ) - (p(y) - p(y^*)) > \varepsilon_y]\\
    &\leq \sum_{y \neq y^*} P\big[\big|(p_n(y) - p_n(y^*) ) - (p(y) - p(y^*))\big| > \varepsilon_y \big]
\end{align*}
By the consistency of the individual posteriors, the difference of two is consistent and so since $\mathcal{Y}$ is a finite set,
\begin{align*}
    &P[\hat{y} \neq y^* \mid x] \\
    &\leq\sum_{y \neq y^*} P[|(p_n(y) - p_n(y^*) ) - (p(y) - p(y^*))| > \varepsilon_y]\\
    &\rightarrow 0 \quad\text{as}\quad n \rightarrow \infty
\end{align*}
\end{proof}

With Lemma \ref{lem:unequal}, the proof of Theorem \ref{theorem:consis_class} follows. Denote the finite samples classification rule $\hat{y} := \argmax_y p_n(y \mid x)$ as before and let $y^* := \argmax_y p(y \mid x)$ be a unique maximum. If $y^*$ were not unique, we would instead consider the aggregate of all such maximum classes as a pseudo class, apply the following analyses, and be confident in both $L_n$ and $L^*$ up to a factor equal to the reciprocal of the number aggregated classes due to a chance guess between them.

Otherwise, for any $\varepsilon > 0$, by the law of total probabilities,
\begin{align*}
    &P\left[|L_n - L^*| > \varepsilon\right] \\
    &=P\left[|p_n(\hat{y} \mid x) - p(y^* \mid x)| > \varepsilon \right] \\
    &=P\left[|p_n(\hat{y} \mid x) - p(y^* \mid x)| > \varepsilon \mid \hat{y} = y^* \right] \times P\left[\hat{y} = y^*\right] \\
    &+P\left[|p_n(\hat{y} \mid x) - p(y^* \mid x)| > \varepsilon \mid \hat{y} \neq y^* \right] \times P\left[\hat{y} \neq y^*\right].
\end{align*}
In the case that $\hat{y} = y^*$, by Corollary \ref{corollary:consis} we have convergence of the posteriors and so
\begin{equation*}
    P\left[|p_n(\hat{y} \mid x) - p(y^* \mid x)| > \varepsilon \mid \hat{y} = y^* \right] \rightarrow 0 \quad\text{as}\,\, n\rightarrow \infty.
\end{equation*}
In the case that $\hat{y} \neq y^*$, by Lemma \ref{lem:unequal} we have that
\begin{equation*}
    P\left[\hat{y} \neq y^*\right] \rightarrow 0 \quad\text{as}\,\, n\rightarrow \infty.
\end{equation*}
Since the probabilities are bounded above by one, it follows that as $n \rightarrow \infty$,
\begin{equation*}
    P\left[|p_n(\hat{y} \mid x) - p(y^* \mid x)| > \varepsilon \mid \hat{y} = y^* \right] \times P\left[\hat{y} = y^*\right] \rightarrow 0
\end{equation*}
and
\begin{equation*}
    P\left[|p_n(\hat{y} \mid x) - p(y^* \mid x)| > \varepsilon \mid \hat{y} \neq y^* \right] \times P\left[\hat{y} \neq y^*\right] \rightarrow 0
\end{equation*}
and thus
\begin{equation*}
    P\left[|L_n - L^*| > \varepsilon\right] = P\left[|p_n(\hat{y} \mid x) - p(y^* \mid x)| > \varepsilon \right] \rightarrow 0
\end{equation*}

\section{Experiment Extras}\label{appendix:sims}


\subsection{Mathematical description of sampling manifolds in simulation examples}
    In \ref{subsubsec:example_projection_manifolds}

\subsection{Three simulated manifolds: time complexity}
Each simulated experiment was run on CPUs and allocated 52 cores for parallel processing. The resulting train and test times as a function of the number of training samples are plotted in Figure \ref{fig:time_plots}. \Mf\, has train and test times on par with those of \Sporf\, and so is not particularly more computationally intensive to run. The ConvNet, however, took noticeably longer to run across simulations for the majority of sample sizes.
\begin{figure}[!htb]
\centering
\includegraphics[width=0.48\textwidth]{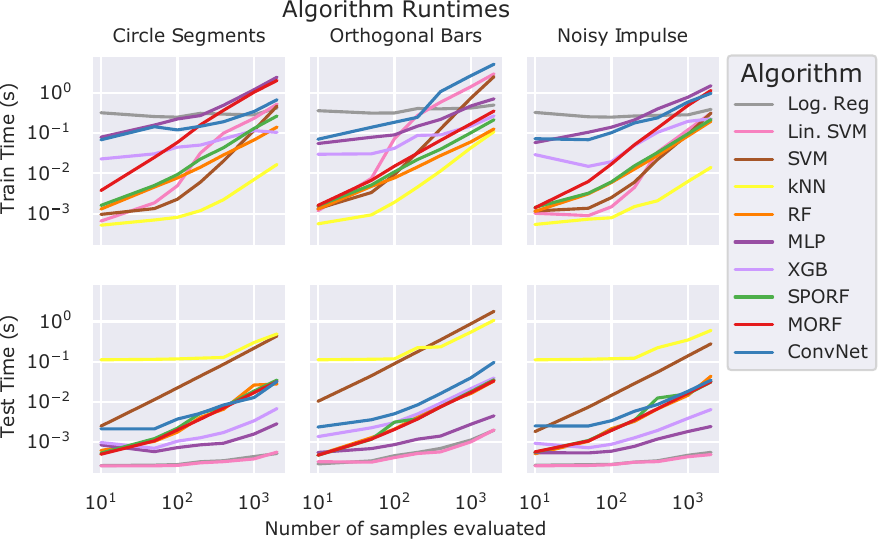}
\caption{Algorithm train times (above) and test times (below) across increasing sample sizes. \Mf\, runtime is not particularly costly and well below ConvNet runtime in most examples.}
\label{fig:time_plots}
\end{figure}

\subsection{Intracranial EEG Experiments - Ethics}
For the motor control, details of the experiment are in \citet{Kerr2017}. If the patient expressed interest in participating, the research staff would verbally review the written, IRB approved consent form. If agreed upon, the patient would sign the written consent and be enrolled in the study. A copy of the written consent would also be given to patient to keep. Experimental protocols were approved by the Cleveland Clinic Institutional Review Board. 

For the epilepsy intracranial EEG data, details on the dataset can be found at \cite{Li862797}. All data were acquired with approval from the local institutional review board (IRB) at each clinical institution: UMMC by the IRB of the University of Maryland School of Medicine; UMH by the University of Miami Human Subject Research Office—Medical Sciences IRB; NIH by the National Institutes of Health IRB; JHH by Johns Hopkins IRB; and CClinic by the Cleveland Clinic IRB. Informed consent was given at each clinical center. The acquisition of data for research purposes was completed with no impact on the clinical objectives of the patient stay. Digitized data were stored in an IRB-approved database compliant with Health Insurance Portability and Accountability Act regulations.

\subsection{1D Locality: Predicting Movement Direction With Intracranial EEG}
\label{sec:movement_prediction_ieeg}\,

\Mf's performance was next evaluated on stereotactic electroencephalogram (sEEG) data recorded in epilepsy patients undergoing a motor control task presented in \citet{Kerr2017,10.3389/fnins.2019.00715}. The classification task presented here is to predict movement direction (up, down, left, or right) based on the sEEG data alone, rather then performing explicit feature engineering, such as computing power in frequency bands. We compare \Mf\, to other classification algorithms. The interesting aspect of this data is that there are no motor regions recorded. Thus, our hypothesis is that only a subset of the recording electrodes over time are important in decoding movement directionality. This is analogous to Experiment D mentioned in Section \ref{sec:multivariate_sims}. Each subject performed the task for several trials, each consisting of a movement instruction followed by a movement generated by the subject. Using only the sEEG data, we sought to decode the movement directionality. For full details on the dataset and clinical problem, we refer the readers to \citet{Kerr2017,10.3389/fnins.2019.00715}.

Here we perform 5-fold cross validation for each subject including all the sEEG recording electrodes time-locked to a movement onset marking. Each fold is \emph{a priori} generated per subject, where there is a set of  testing trials left out. The overall task is very challenging because there are no motor brain regions being recorded. Nonetheless, we expect that other brain regions are involved in the motor control process. \Mf\ is able to achieve a superior performance measured by AUC relative to the other classifiers, as seen in Figure \ref{fig:movement_prediction}. Notably, \Mf\ and \Sporf\ perform the best in this setting with a limited set of training samples, whereas the ConvNet performs slightly better then chance on the test set, overfitting to training set. Across the set of all folds of all subjects, \Mf\ was never worse than another classifier in terms of the median pairwise difference in Cohen's kappa while being significantly (p-value $\leq 0.05$) better than the MLP and ConvNet per a Wilcoxon paired sign test on those same pairwise differences.

\begin{figure}[!htb]
    \begin{subfigure}[t]{0.48\textwidth}
        \centering   
        \includegraphics[width=\linewidth]{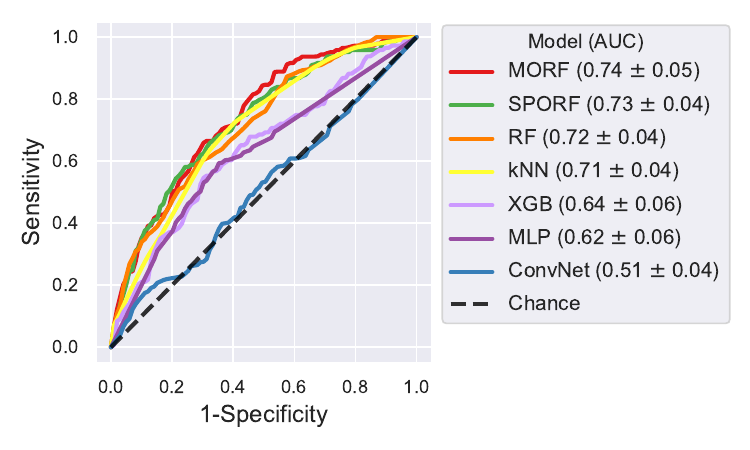}
    \end{subfigure}
    \begin{subfigure}[t]{0.44\textwidth}
        \centering   
        \includegraphics[width=\linewidth]{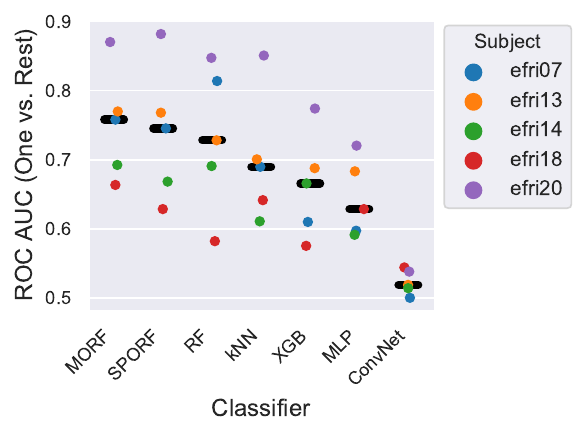}
    \end{subfigure}
    \centering
    \caption{\Mf\ performance on decoding movement direction from the raw sEEG data in non-motor brain regions. Subjects are undergoing a motor-control task. The naming of subjects is simply derived from their clinical monitoring session and does not reflect any specific numbering scheme. \textbf{(Top)} Shows ROC curve of moving down in the motor task decoded with all classifiers on the same set of data and their AUC scores. \textbf{(Bottom)} Shows a summary AUC stripplot where each dot represents the held-out trial median AUC score for a certain subject over 5-fold CV, and the median of the overall AUC for each classifier is shown (solid black line). In almost all subjects, \Mf\ gains in AUC compared to the other classifiers with fixed hyperparameters and fixed trials in each of the 5 folds.}
    \label{fig:movement_prediction}
\end{figure}

\subsection{Three simulated manifolds: model-misspecification}
\label{supp_sec:model_miss}
For each simulation problem, we optimized the optimal parameters for \Mf\ on a large dataset of 2000 samples using grid search over a range of valid parameter values for the manifold structure. Once we arrived at an optimal parametrization, we then proceeded to modify the parametrizations for each of the experiments to change the dimensions of the possible patches \Mf\ could sample. For full details, see the online repository, where the experiment was done (https://github.com/adam2392/morf-demo). We then performed 10-fold stratified cross-validation for each parameter setting, computing accuracy on the held-out test set. We randomly produced 500 samples from each simulated dataset, and kept all other \Mf\ parameters the same. We used a total of 500 trees for each problem. Each non-optimal parametrization was normalized with respect to the accuracy scores of the optimal parametrization (subtracting the mean and dividing by the standard deviation of the optimal scores). The normalized accuracy scores of each non-optimal parameter setting are then shown in Supplementary Figure \ref{fig:morf_model_misspecification}. 

\begin{figure}
    \centering
    \begin{subfigure}[t]{0.32\textwidth}
        \includegraphics[width=\linewidth]{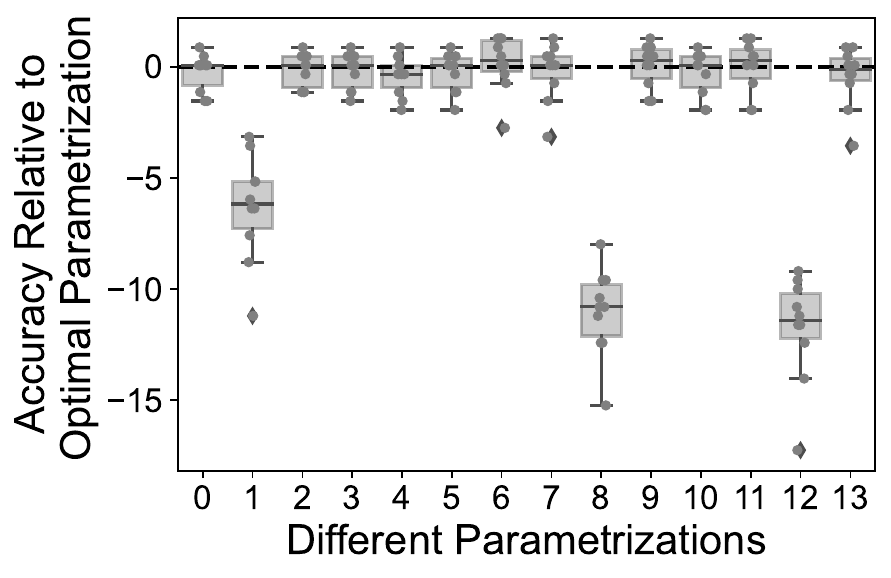} 
        \caption{}
    \end{subfigure}
    \centering
    \begin{subfigure}[t]{0.32\textwidth}
        \includegraphics[width=\linewidth]{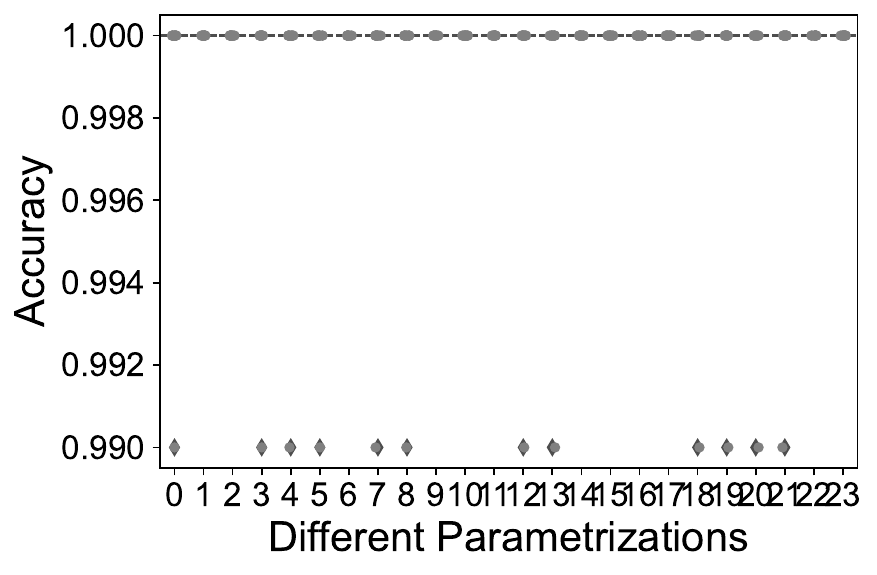} 
        \caption{}
    \end{subfigure}
    \centering
    \begin{subfigure}[t]{0.32\textwidth}
        \includegraphics[width=\linewidth]{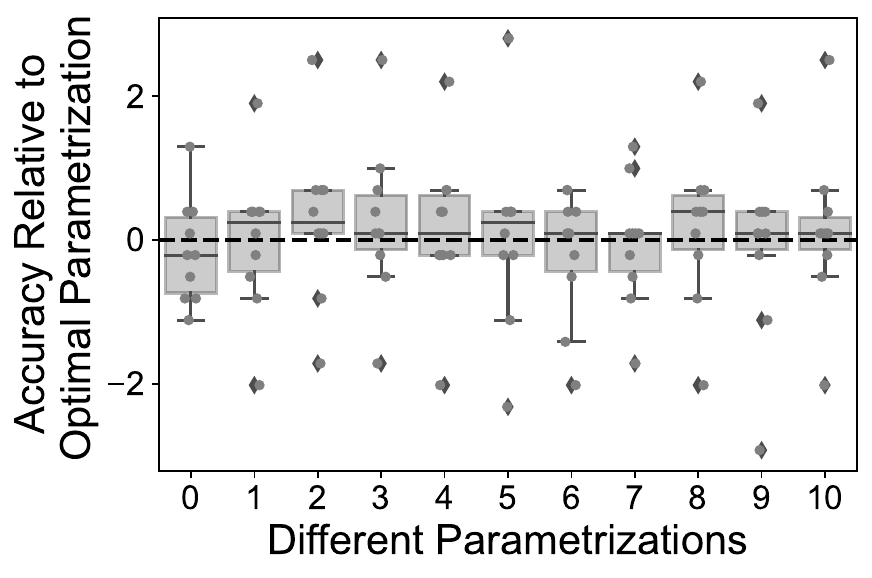}
        \caption{}
    \end{subfigure}
    \caption{\Mf\ performance when model is mis-specified. (a) Circle segments, (b) Horizontal bars, and (c) a 1D time-series with an impulse. The dashed line in (a) and (c) indicate a consistent accuracy score relative to the optimal parametrizations. Each simulation (b) is shown with just accuracy because the relatively low variance in the optimal parameter classifier made the resulting normalized plot uninterpretable. The low performance of a few parameterizations in (a) were when the maximum possible patch size was set to 6. This likely restricts \Mf\ from learning the Circle Segment structure as fast as when \Mf\ can sample larger patches.}
    \label{fig:morf_model_misspecification}
\end{figure}



\clearpage
\section{Pseudocode}\label{appendix:pseudo}
\begin{algorithm}[!ht]
  \caption{Learning a Manifold Oblique decision tree, modified from \citet{SPORF}. 
}
  \label{alg:rerftrain}
\begin{algorithmic}[1]
  \Require (1) $\mathcal{D}_n$: training data (2) $d$: dimensionality of the projected space, (3) $f_{\mathbf{A}}$: distribution of the atoms, 
  (4) $\Theta$: set of split eligibility criteria
  \Ensure A \Mf\, decision tree $T$
  \Function{$T =$ growtree}{$\mathbf{X},\mathbf{y},f_{\mathbf{A}},\Theta$}
  \State $c = 1$
  \Comment{$c$ is the current node index}
  \State $M = 1$
  \Comment $M$ is the number of nodes currently existing
  \State $S^{(c)} =$ bootstrap($\{1,...,n\}$)
  \Comment $S^{(c)}$ is the indices of the observations at node $c$
  \While{$c < M + 1$}
  \Comment visit each of the existing nodes
  \State $(\mathbf{X'},\mathbf{y'}) = (\mathbf{x}_i,y_i)_{i \in S^{(c)}}$
  \Comment data at the current node
  \Linefor{$k = 1,\ldots,K$}{$n_k^{(c)} = \sum_{i \in S^{(c)}} I[y_i = k]$}
  \Comment class counts {(for classification)}
  \If{$\Theta$ satisfied}
  \Comment do we split this node?
  \State $\mathbf{A} = [\mathbf{a}_1 \cdots \mathbf{a}_d] \sim f_{\mathbf{A}}$
  \Comment sample random $p \times d$ matrix of atoms
  \State $\mathbf{\widetilde{X}} = \mathbf{A}^T\mathbf{X'} = (\mathbf{\widetilde{x}}_i)_{i \in S^{(c)}}$
  \Comment random projection into new feature space
  \State $(j^*,t^*) =$ findbestsplit($\mathbf{\widetilde{X}},\mathbf{y'}$)
  \Comment Algorithm \ref{alg:rerfsplit} 
  \State $S^{(M+1)} = \{i: \mathbf{a}_{j^*} \cdot \mathbf{\widetilde{x}}_i \leq t^* \quad \forall i \in S^{(c)}\}$
  \Comment assign to left child node
  \State $S^{(M+2)} = \{i: \mathbf{a}_{j^*} \cdot \mathbf{\widetilde{x}}_i > t^* \quad \forall i \in S^{(c)}\}$
  \Comment assign to right child node
  \State $\mathbf{a}^{*(c)} = \mathbf{a}_{j^*}$
  \Comment store best projection for current node
  \State $\tau^{*(c)} = t^*$
  \Comment store best split threshold for current node
  \State $\kappa^{(c)} = \{M+1,M + 2\}$
  \Comment node indices of children of current node
  \State $M = M + 2$
  \Comment update the number of nodes that exist
  \Else
  \State $(\mathbf{a}^{*(c)},\tau^{*(c)},\kappa^{*(c)}) =$ NULL
  \EndIf
  \State $c = c + 1$
  \Comment move to next node
  \EndWhile
  \State \Return $(S^{(1)},\{\mathbf{a}^{*(c)},\tau^{*(c)},\kappa^{(c)},\{n_k^{(c)}\}_{k \in \mathcal{Y}}\}_{c=1}^{m-1})$
  \EndFunction
\end{algorithmic}
\end{algorithm}

\begin{algorithm}[!ht]
  \caption{As in \citet{SPORF}. Finding the best node split. This function is called by growtree (Alg \ref{alg:rerftrain}) at every split node. For each of the $p$ dimensions in $\mathbf{X} \in \Real^{p \times n}$, a binary split is assessed at each location between adjacent observations. The dimension $j^*$ and split value $\tau^*$ in $j^*$ that best split the data are selected. The notion of ``best'' means maximizing some choice in scoring function. In classification, the scoring function is typically the reduction in Gini impurity or entropy. 
  The increment function called within this function updates the counts in the left and right partitions as the split is incrementally moved to the right.}
  \label{alg:rerfsplit}
\begin{algorithmic}[1]
  \Require (1) $(\mathbf{X},\mathbf{y}) \in \Real^{p \times n} \times \mathcal{Y}^n$, where $ \mathcal{Y} = \{1,\ldots,K\}$
  \Ensure (1) dimension $j^*$, (2) split value $\tau^*$
  \Function{$(j^*,\tau^*) =$ findbestsplit}{$\mathbf{X},\mathbf{y}$}
  \For{$j = 1,\ldots,p$}
  \State Let $\mathbf{x}^{(j)} = (x_1^{(j)},\ldots,x_n^{(j)})$ be the $jth$ row of $\mathbf{X}$.
  \State $\{m_i^j\}_{i \in [n]} =$ sort($\mathbf{x}^{(j)}$)
  \Comment $m_i^j$ is the index of the $i^{th}$ smallest value in $\mathbf{x}^{(j)}$
  \State $t = 0$
  \Comment initialize split to the left of all observations
  \State $n' = 0$
  \Comment number of observations left of the current split
  \State $n'' = n$
  \Comment number of observations right of the current split
  \If{(\text{task is classification})}
  \For{$k = 1,\ldots,K$}
  \State $n_k = \sum_{i=1}^n I[y_i = k]$
  \Comment total number of observations in class $k$
  \State $n'_k = 0$
  \Comment number of observations in class $k$ left of the current split
  \State $n''_k = n_k$
  \Comment number of observations in class $k$ right of the current split
  \EndFor
  \EndIf
  \For{$t = 1,\ldots,n-1$}
  \Comment assess split location, moving right one at a time
  \State $(\{(n'_k,n''_k)\},n',n'',y_{m_t^j}) =$ increment($\{(n'_k,n''_k)\},n',n'',y_{m_t^j}$)
  \State $Q^{(j,t)} =$ score($\{(n'_k,n''_k)\},n',n''$)
  \Comment measure of split quality
  \EndFor
  \EndFor
  \State $(j^*,t^*) = \argmax\limits_{j,t} Q^{(j,t)}$
  \Linefor{$i = 0,1$}{$c_i = m_{t^* + i}^{j^*}$}
  \State $\tau^* = \frac{1}{2}(x_{c_0}^{(j^*)} + x_{c_1}^{(j^*)})$
  \Comment compute the actual split location from the index $j^*$
  \State \Return $(j^*,\tau^*)$
  \EndFunction
\end{algorithmic}
\end{algorithm}

\clearpage
\section{Hyperparameters}\label{appendix:hyper}
\begin{table}[ht]
\caption{ConvNet hyperparameters for each experiment.} 
\centering 
\begin{tabular}{ccl}
\hline
{ Experiment} & { Classifier} & {Architecture Sequence} \\
\hline
{ Circle} & { ConvNet} & {Conv1d(32, window=6, stride=1)} \\
{ } & { } & { MaxPool1d(window=2, stride=2)} \\
{ } & { } & { Conv1d(64, window=10, stride=1)} \\
{ } & { } & { MaxPool1d(window=2, stride=2)} \\
{ } & { } & { Dropout(p=0.5), Linear(500, 2)} \\
{ H/V Bars} & { ConvNet} & { Conv2d(32, window=5, stride=1)} \\
{ } & { } & { MaxPool1d(window=2, stride=2)} \\
{ } & { } & { Conv2d(64, window=5, stride=1)} \\
{ } & { } & { MaxPool1d(window=2, stride=2)} \\
{ } & { } & { Dropout(p=0.5), Linear(200, 2)} \\
{ Impulse} & { ConvNet} & { Conv1d(32, window=10, stride=1)} \\
{ } & { } & { MaxPool2d(window=2, stride=2)} \\
{ } & { } & { Conv2d(64, window=5, stride=1)} \\
{ } & { } & { MaxPool2d(window=2, stride=2)} \\
{ } & { } & { Dropout(p=0.5), Linear(200, 2)} \\
{Experiments D, E} & { ConvNet} & { Conv2d(32, window=2, stride=1)} \\
{ } & { } & { MaxPool2d(window=2, stride=2)} \\
{ } & { } & { Conv2d(32, window=5, stride=1)} \\
{ } & { } & { MaxPool2d(window=2, stride=2)} \\
{ } & { } & { Conv2d(64, window=5, stride=1)} \\
{ } & { } & { MaxPool2d(window=2, stride=2)} \\
{ } & { } & { Linear(64), Linear(n\_classes)} \\
{ MNIST} & { ConvNet} & { Conv2d(32, window=5, stride=1)} \\
{ } & { } & { MaxPool2d(window=2, stride=2)} \\
{ } & { } & { Conv2d(64, window=5, stride=1)} \\
{ } & { } & { MaxPool2d(window=2, stride=2)} \\
{ } & { } & { Dropout(p=0.5), Linear(200, 10)} \\
{Surgical Outcome} & { ConvNet} & { Conv2d(32, window=2, stride=1)} \\
{ } & { } & { MaxPool2d(window=2, stride=2)} \\
{ } & { } & { Conv2d(32, window=5, stride=1)} \\
{ } & { } & { MaxPool2d(window=2, stride=2)} \\
{ } & { } & { Conv2d(64, window=5, stride=1)} \\
{ } & { } & { MaxPool2d(window=2, stride=2)} \\
{ } & { } & { Linear(64), Linear(n\_classes)} \\
{Predicting Movement} & { ConvNet} & { Conv2d(32, window=3, stride=1)} \\
{ } & { } & { MaxPool2d(window=2, stride=2)} \\
{ } & { } & { Conv2d(64, window=3, stride=1)} \\
{ } & { } & { MaxPool2d(window=2, stride=2)} \\
{ } & { } & { Conv2d(64, window=3, stride=1)} \\
{ } & { } & { MaxPool2d(window=2, stride=2)} \\
{ } & { } & { Linear(64), Linear(n\_classes)} \\
\hline
\end{tabular}
\label{table:cnn}
\end{table}

\begin{sidewaystable}[ht]
\caption{\textit{scikit-learn}, \Sporf, and \Mf\, hyperparameters for each experiment.}
\small
\centering
\begin{tabular}{lll}
\hline
Experiment & Classifier & Hyperparameters \\ \hline
{All} & {Lin. SVM} & {C=1,   penalty="l2";kernel="linear";loss="squared\_hinge"} \\
{All} & {Log. Reg} & {C=1, penalty="l2"} \\
{All} & {MLP} & {activation="relu"; alpha=0.0001; hidden\_layer\_sizes=(100,);   solver="adam"} \\
{All} & {RF} & {n\_trees=500, max\_features='sqrt'} \\
{All} & {XGB} & {n\_boosting\_rounds = 10; learning\_rate=0.3; max\_depth=6; subsample=1; tree\_method='auto' (all default)} \\
{All} & {SPORF} & {n\_trees=500, max\_features='sqrt'} \\
{All} & {SVM} & {C=1; gamma=1/(n\_features*Var(X)); kernel="rbf"} \\
{All} & {kNN} & {n\_neighbors=5; p=2} \\
{Circle} & {MORF} & {n\_trees=500, max\_features=0.5; patch\_height\_max=1; patch\_height\_min=1;   patch\_width\_max=12; patch\_width\_min=3} \\
{H/V Bars} & {MORF} & {n\_trees=500, max\_features='sqrt'; patch\_height\_max=2; patch\_height\_min=2;   patch\_width\_max=9; patch\_width\_min=2} \\
{Impulse} & {MORF} & {n\_trees=500, max\_features=0.3; patch\_height\_max=1; patch\_height\_min=1;   patch\_width\_max=12; patch\_width\_min=2} \\
{Experiment D} & {MORF} & {n\_trees=500, max\_features='sqrt'; patch\_height\_max=2; patch\_height\_min=2;   patch\_width\_max=10; patch\_width\_min=5;} \\
{Experiment E} & {MORF} & {n\_trees=500, max\_features='sqrt'; patch\_height\_max=2; patch\_height\_min=1;   patch\_width\_max=20; patch\_width\_min=5;} \\
{MNIST} & {MORF} & {n\_trees=500, max\_features='sqrt' patch\_height\_max=2; patch\_height\_min=2;   patch\_width\_max=5; patch\_width\_min=2;} \\
{Surgical Outcome} & {MORF} & {n\_trees=500, max\_features='sqrt'; patch\_height\_max=sqrt(height); patch\_height\_min=1;} \\  {} & {} & {patch\_width\_max=sqrt(width); patch\_width\_min=1;} \\
\end{tabular}
\label{table:sklearn}
\end{sidewaystable}

\end{document}